\documentclass[review]{elsarticle}

\usepackage{amsmath,amssymb,amsfonts}
\usepackage{mathtools}
\usepackage{amsthm}
\usepackage{multirow}
\usepackage{graphicx}
\graphicspath{{./figures/}}
\usepackage{booktabs}
\usepackage{float}
\usepackage{subcaption}
\usepackage{comment}
\usepackage{algorithm}
\usepackage{algpseudocode}
\usepackage{multirow}

\usepackage[a4paper,margin=20mm]{geometry}

\theoremstyle{plain}
\newtheorem{theorem}{Theorem}[section]
\newtheorem{lemma}[theorem]{Lemma}

\theoremstyle{definition}
\newtheorem{definition}{Definition}[section]

\theoremstyle{remark}

\usepackage[hidelinks]{hyperref}

\usepackage[nameinlink,noabbrev]{cleveref}

\journal{Expert Systems with Applications}

\biboptions{authoryear}

\newcommand{\R}{\mathbb{R}}
\newcommand{\cL}{\mathcal{L}}
\newcommand{\cN}{\mathcal{N}}
\newcommand{\norm}[1]{\left\lVert #1 \right\rVert}

\begin{document}
\begin{frontmatter}

\title{When Gradient Clipping Becomes a Control Mechanism for Differential Privacy in Deep Learning}

\author[label1]{Mohammad Partohaghighi}
\ead{mpartohaghighi@ucmerced.edu}

\author[label2]{Roummel Marcia}
\ead{rmarcia@ucmerced.edu}

\author[label3]{Bruce J. West}
\ead{brucejwest213@gmail.com}

\author[label4]{YangQuan Chen\corref{cor1}}
\ead{ychen53@ucmerced.edu}

\cortext[cor1]{Corresponding author.}

\address[label1]{%
Electrical Engineering and Computer Science,\\
University of California, Merced, CA 95343, USA
}

\address[label2]{%
Department of Applied Mathematics,\\
University of California, Merced, CA 95343, USA
}

\address[label3]{%
Department of Innovation and Research,\\
North Carolina State University, Raleigh, NC, USA
}

\address[label4]{%
Mechatronics, Embedded Systems and Automation (MESA) Lab,\\
Department of Mechanical Engineering, School of Engineering,\\
University of California, Merced, CA 95343, USA
}

\begin{abstract}
Privacy-preserving training of neural networks for sensitive data analysis often relies on differentially private stochastic optimization with gradient clipping and noise injection. In this setting, the clipping threshold is a critical control knob: a small threshold can introduce substantial optimization bias through systematic over-clipping, while a large threshold can amplify the effect of injected noise and reduce predictive accuracy. Many adaptive clipping strategies attempt to tune this threshold using per-example gradient norm statistics, which can increase computational overhead and may be sensitive to dataset and architecture choices.

We propose a control-driven clipping strategy that regulates the clipping threshold using a lightweight, weight-only spectral diagnostic computed from model parameters. At periodic probe steps, the method analyzes a designated weight matrix through spectral decomposition and estimates a heavy-tailed spectral indicator that reflects training stability. This indicator is smoothed over time and fed into a bounded feedback controller that updates the clipping threshold multiplicatively in the log domain. Because the controller uses only model parameters produced during privacy-preserving training, the resulting updates constitute post-processing and do not increase privacy loss beyond that of the underlying privacy-preserving optimization mechanism under standard privacy accounting assumptions.

Across vision and tabular learning benchmarks under matched privacy budgets, the proposed approach improves utility and training stability relative to fixed-threshold clipping and remains competitive with representative adaptive clipping and privacy-preserving optimization baselines. Additional diagnostics and ablation analyses indicate stable controller behavior, robustness to reasonable probe and controller configurations, and modest computational overhead, with spectral probing accounting for only a small fraction of total training time.
\end{abstract}

\begin{keyword}
differential privacy \sep private deep learning \sep neural network training \sep gradient clipping \sep feedback control \sep heavy-tailed spectral analysis \sep sensitive data learning
\end{keyword}

\end{frontmatter}
\section{Introduction}
\label{sec:introduction}

Machine learning models are increasingly trained on sensitive data such as medical records, financial transactions, location traces, and user-generated content, where memorization and information leakage are not merely hypothetical failure modes but realistic adversarial objectives \cite{partohaghighi2025roughness}. Differential Privacy (DP) provides a rigorous, composable guarantee that bounds what an observer can infer about any single individual from the released model or training outputs \cite{dwork2006calibrating,dwork2014aodp,cheng2025adaptive,yuan2021differential}. For modern deep learning, the workhorse instantiation is \emph{Differentially Private Stochastic Gradient Descent} (DP-SGD), which clips per-example gradients and injects Gaussian noise at each step \cite{abadi2016dpsgd}. DP-SGD has enabled private training across diverse settings, but its practical performance is notoriously sensitive to one hyperparameter: the clipping threshold \(C\).

\paragraph{The clipping bottleneck in DP-SGD}
Clipping enforces a bound on per-example influence, making sensitivity finite and enabling calibrated Gaussian noise \cite{abadi2016dpsgd,dwork2014aodp}. Yet, \(C\) directly controls a three-way tension:
(i) \emph{clipping bias} (too small \(C\) truncates signal and underfits),
(ii) \emph{noise magnitude} (too large \(C\) increases the injected noise scale \(\sigma C\) for a fixed noise multiplier \(\sigma\)),
and (iii) \emph{optimization stability} (interacting with learning rates, architecture, and loss geometry).
A growing line of work has analyzed how clipping bias can dominate training dynamics and create failure modes even when privacy accounting is correct \cite{chen2020clip,xiao2023instruct}. In practice, practitioners often spend substantial compute sweeping \(C\) (and sometimes per-layer variants), which is expensive and frequently non-transferable across datasets, architectures, and training recipes.

\paragraph{Where the community has pushed: accounting, clipping, and optimizers}
On the privacy side, DP-SGD’s guarantees rely on composition and amplification-by-subsampling analyses. Rényi Differential Privacy (RDP) \cite{mironov2017rdp} and tighter subsampling accounting \cite{wang2019subsampled,balle2018subsampling} have become standard, while numerical and fast Fourier transform (FFT)-based accountants can yield tighter bounds for practical regimes \cite{gopi2021numerical,ghazi2022evolving}. Alternative formalisms such as \(f\)-DP / Gaussian DP (GDP) offer a hypothesis-testing view and often simplify composition reasoning \cite{dong2022gdp}.
On the utility side, multiple approaches aim to reduce the brittleness of the clipping choice: coordinate-wise or adaptive clipping mechanisms \cite{pichapati2019adaclip}, private quantile/median-based adaptation in user-level settings \cite{andrew2019dplac}, and more recent automatic clipping controllers for DP-SGD \cite{bu2023autoclip}. A complementary direction improves DP training stability and convergence using (private) adaptive optimization, though adaptivity itself can be privacy-costly unless carefully engineered \cite{li2023dp2,tang2024dpadambc}. Other work changes the sampling/training dynamics to improve signal-to-noise, for example importance sampling under DP constraints \cite{wei2022dpis} or sharpness-aware training adapted to DP to improve generalization under noise \cite{foret2021sam,park2023dpsat}. Finally, substantial engineering effort has gone into reducing DP-SGD overhead, especially per-example gradient computation and clipping \cite{lee2021fastclip}, and into mature software stacks for reproducible DP training \cite{yousefpour2021opacus,tensorflowprivacy2019}.

\paragraph{A spectral-control viewpoint}
This paper introduces a different lens: treat clipping selection as a \emph{closed-loop control} problem with a feedback signal that is (a) informative about training health and (b) safe under differential privacy post-processing. Our key observation is that deep networks’ weight matrices exhibit measurable spectral regularities that correlate with capacity, implicit regularization, and training phase. In particular, a line of random-matrix-inspired work argues that well-trained networks often display \emph{heavy-tailed} spectral structure and that fitted tail exponents can track model quality trends \cite{martin2021implicit,martin2019heavytail,martin2021nature}. The associated open-source \emph{WeightWatcher} tool, a weight-spectrum analysis package inspired by random matrix theory, operationalizes this idea via spectral diagnostics computed directly from weights, without accessing training or test data at probe time \cite{weightwatcher_github}.

\paragraph{A weight-spectrum feedback controller for adaptive clipping under differential privacy}
We propose \textbf{WW-DP-SGD}, a DP-SGD variant that adapts the clipping threshold \(C_t\) using a \emph{weight-only} spectral health signal computed from the current (private) model parameters. At periodic probe steps, we estimate a WeightWatcher-style heavy-tailed exponent \(\zeta_t\) from a chosen weight matrix and regulate training to keep \(\zeta_t\) within a pre-defined \emph{spectral health zone}. We use a saturated log-domain feedback law that continuously regulates toward the zone center, yielding stable two-sided updates in general and avoiding collapse of \(C_t\).
WW-DP-SGD does \emph{not} claim additional privacy beyond DP-SGD. Under the same minibatching/subsampling assumptions and the same noise multiplier \(\sigma\), privacy accounting follows the same established analyses used for DP-SGD \cite{abadi2016dpsgd,mironov2017rdp,wang2019subsampled,gopi2021numerical,dong2022gdp}. The key point is that the spectral probe and the controller read only the current model parameters, which are outputs of a DP mechanism; therefore the adaptation of \(C_t\) is post-processing and cannot increase privacy loss \cite{dwork2014aodp}. Moreover, the per-step $\ell_2$ sensitivity of the clipped gradient sum scales with \(C_t\) and the injected Gaussian noise scales with \(C_t\) as well, so the per-step privacy is governed by \((q,\sigma)\) and composed over \(T\) steps by standard RDP/GDP accountants \cite{mironov2017rdp,dong2022gdp}.

\paragraph{Contributions}
We summarize our main contributions as follows:
\textbf{(Algorithmic)} we introduce \textbf{WW-DP-SGD}, a closed-loop DP-SGD variant that adaptively sets the clipping norm using a WeightWatcher-style spectral tail exponent together with a spectral ``health zone'' \cite{martin2021implicit,weightwatcher_github};
\textbf{(Control design)} we instantiate a log-domain saturated feedback controller with interpretable parameters \((\kappa,\beta,r)\) and an optional experimental clamp \((C_{\min},C_{\max})\);
\textbf{(DP consistency)} we formalize why this controller preserves differential privacy under standard DP-SGD accounting assumptions via post-processing and composition \cite{dwork2014aodp,mironov2017rdp,wang2019subsampled};
\textbf{(Positioning vs.\ prior art)} we relate spectral feedback control to existing adaptive clipping and DP optimization methods \cite{pichapati2019adaclip,bu2023autoclip,wei2022dpis,li2023dp2,park2023dpsat};
and \textbf{(Practical guidance)} we provide implementation details such as probe-layer choice, probe period, smoothing, and controller gain along with evaluation protocols using common libraries for private training \cite{yousefpour2021opacus,tensorflowprivacy2019,lee2021fastclip}.

\paragraph{Paper organization}
Section~\ref{sec:background} reviews DP-SGD and introduces the WeightWatcher-style spectral exponent used as a weight-only diagnostic signal.
Section~\ref{sec:proposedalgorithm} presents WW-DP-SGD, including the log-domain saturated controller for adaptive clipping.
Section~\ref{sec:privacy} states the DP guarantee and the accounting protocol under Poisson subsampling.
Section~\ref{sec:experiments} evaluates privacy--utility trade-offs, robustness to clipping sensitivity and distribution shift, and runtime overhead.
Section~\ref{sec:limitations} discusses limitations and outlines directions for future work.


\section{Background and Notation}
\label{sec:background}
We first define differential privacy (DP), then instantiate DP via DP-SGD under Poisson subsampling.
Next, we introduce a \emph{weight-only spectral diagnostic} (a WeightWatcher-style proxy derived from the weight spectrum) used as the feedback signal,
and finally define the controller that adapts the clipping threshold while preserving DP guarantees through post-processing.

\subsection{Differential Privacy}
\label{subsec:dp}
\begin{definition}[Differential Privacy (DP)]\cite{abadi2016dpsgd}
\label{def:dp}
A randomized mechanism \(M\) is \((\varepsilon,\delta)\)-differentially private if for all neighboring datasets
\(D,D'\) differing in exactly one example and all measurable sets \(S\),
\[
\Pr[M(D)\in S] \le e^{\varepsilon}\Pr[M(D')\in S] + \delta.
\]
\end{definition}
DP provides a worst-case stability guarantee: changing one training point has a limited effect on the
distribution of the released output. We next review the standard training mechanism used in this work. Differentially Private Stochastic Gradient Descent (DP-SGD) is the standard way to train deep models under DP.
Let \(D=\{x_1,\ldots,x_N\}\), parameters \(\theta_t\in\R^d\), sampling rate \(q\in(0,1)\), and learning rate \(\eta_t\).
At step \(t\), we independently include each example with probability \(q\) (Poisson subsampling), forming minibatch \(L_t\). For each sampled example \(x_i\in L_t\), compute the per-example gradient
\begin{equation}
\label{eq:per_example_grad}
g_t(x_i)=\nabla_\theta \cL(\theta_t,x_i).
\end{equation}
We initialize the clipping schedule with a user-specified \(C_0>0\); in standard DP-SGD, \(C_t \equiv C_0\) is fixed across steps, while in WW-DP-SGD \(C_t\) is adapted over time by the controller in Section~\ref{subsec:controller}. Clip to an \(\ell_2\) threshold \(C_t>0\),
\begin{equation}
\label{eq:clip}
\bar g_t(x_i)=\frac{g_t(x_i)}{\max\!\left(1,\frac{\norm{g_t(x_i)}}{C_t}\right)},
\end{equation}
and add Gaussian noise calibrated to \(C_t\). Writing \(I_d\) for the \(d\times d\) identity matrix,
\begin{equation}
\label{eq:dpsgd}
\tilde g_t=
\frac{1}{|L_t|}
\left(
\sum_{x_i\in L_t}\bar g_t(x_i) + \cN\!\left(0,\sigma^2 C_t^2 I_d\right)
\right),
\qquad
\theta_{t+1}=\theta_t-\eta_t \tilde g_t,
\end{equation}
where \(\sigma\) is the noise multiplier (fixed across steps in our experiments). Under Poisson subsampling, \(|L_t|=0\) can occur. In that case, we perform a safe DP-preserving update by
skipping the step (i.e., set \(\theta_{t+1}=\theta_t\)) and retaining \(C_t\). This is post-processing and does not
weaken DP. We compose privacy over \(T\) steps using either a R\'enyi Differential Privacy (RDP) accountant or a
Privacy Loss Random Variable (PRV) accountant, both compatible with Poisson subsampling.
The accountant takes \((q,\sigma,T,\delta)\) and returns \((\varepsilon,\delta)\) for the full run. The next ingredient is a weight-only spectral proxy that we use to adapt \(C_t\) in a controlled way.


\section{The proposed algorithm}
\label{sec:proposedalgorithm}
\paragraph{Overview.}
This section presents WW-DP-SGD as a \emph{DP-safe, closed-loop} variant of DP-SGD that adapts the clipping threshold without accessing per-example gradient statistics for control.
We first define the weight-only diagnostic signal: a WeightWatcher-style estimate of a heavy-tailed spectral exponent \(\zeta_t\) computed from a fixed probe weight matrix \(W(\theta_t)\).
We then introduce the notation used throughout the method and specify a practical \emph{spectral health zone} \((\zeta_{\min},\zeta_{\max})\) that serves as the control target.
Finally, we describe a log-domain saturated feedback controller that updates the clipping threshold \(C_t\) multiplicatively (with an optional clamp for numerical stability), and we summarize the full procedure in Algorithm~\ref{alg:wwdpsgd_v8}.
\subsection{WeightWatcher-style spectral exponent (WW)}
\label{subsec:ww}
To obtain a diagnostic signal from the model weights, we apply a WeightWatcher-style (WW) heavy-tail analysis
to a fixed probe weight matrix \(W(\theta_t)\in\R^{m\times n}\) (dense layers directly; convolutional kernels reshaped to 2D).
Compute the singular value decomposition (SVD):
\begin{equation}
\label{eq:svd}
W(\theta_t) = U \Sigma V^\top, \quad \Sigma = \mathrm{diag}(\sigma_1, \dots, \sigma_r),
\end{equation}
and define eigenvalues \(\lambda_j=\sigma_j^2\) of the correlation matrix \(W^\top W\).
We estimate a heavy-tailed exponent \(\zeta_t\) by fitting a power-law to the \emph{upper tail} of the spectrum:
\begin{equation}
\label{eq:powerlaw}
p(\lambda) \propto \lambda^{-\zeta_t} \quad \text{for} \quad \lambda \geq \lambda_{\min}.
\end{equation}
In practice, the fit is performed on the largest eigenvalues (e.g., top-\(k\) or those exceeding a threshold)
using robust regression in log-log coordinates.
We denote this procedure as
\[
\zeta_t = \mathrm{WW}(W(\theta_t)).
\]
 In parts of the WW literature the exponent is often denoted by \(\alpha\);
here we use \(\zeta\) consistently throughout the paper. This scalar proxy is then smoothed and fed to a controller that updates \(C_t\) multiplicatively in log-space.
\subsection{Notation}
\label{subsec:notation}
\begin{table}[H]
\centering
\caption{Notation used throughout the paper.}
\label{tab:notation}
\begin{tabular}{ll}
\toprule
Symbol & Meaning \\
\midrule
\(D=\{x_i\}_{i=1}^N\) & Dataset; \(N\) is dataset size \\
\(d\) & Parameter dimension (\(\theta_t\in\R^d\)) \\
\(q\) & Poisson subsampling rate \\
\(t=0,\dots,T-1\) & Step index; \(T\) total number of steps \\
\(C_0\) & Initial clipping threshold (\(C_t\equiv C_0\) for fixed-clipping DP-SGD) \\
\(\theta_t\) & Model parameters at step \(t\) \\
\(\eta_t\) & Learning rate at step \(t\) \\
\(L_t\) & Poisson-subsampled minibatch at step \(t\) \\
\(g_t(x_i), \bar g_t(x_i)\) & Per-example gradient; clipped gradient \\
\(C_t\) & Clipping threshold at step \(t\) \\
\(\sigma\) & Noise multiplier (fixed across steps) \\
\(I_d\) & \(d\times d\) identity matrix (noise covariance uses \(I_d\)) \\
\(\tilde g_t\) & Noisy averaged gradient released by DP-SGD \\
\(W(\theta_t)\) & Probed weight matrix used for WW diagnostic \\
\(\zeta_t\) & Raw estimated heavy-tailed exponent \(\mathrm{WW}(W(\theta_t))\) \\
\(\hat{\zeta}_t\) & EMA-smoothed exponent used for control \\
\(\zeta_\star, r\) & Health-zone center and radius (defaults \(\zeta_\star=4, r=2\)) \\
\(K\) & Probe period (compute \(\zeta\) every \(K\) steps) \\
\(\beta\) & EMA smoothing factor \\
\(\kappa\) & Controller gain \\
\(u_t\) & Log-clip state \(u_t=\log C_t\) \\
\(e_t\) & Centered error \(e_t=\hat{\zeta}_t-\zeta_\star\) \\
\(\phi_t\) & Saturated signal \(\phi_t=\mathrm{sat}(e_t/r)\) \\
\(\mathrm{sat}(\cdot)\) & Saturation: \(\mathrm{sat}(x)=\max(-1,\min(1,x))\) \\
\(\mathrm{clip}(\cdot)\) & Clamp to interval \([a,b]\): \(\mathrm{clip}(y,a,b)=\min(b,\max(a,y))\) \\
\(C_{\min},C_{\max}\) & Optional clamp bounds for numerical stability \\
\(\lambda_{\min}\) & Lower cutoff used in tail fitting for \(\zeta\) \\
\bottomrule
\end{tabular}
\end{table}
\subsubsection{Spectral health zone}
\label{subsec:zone}
We follow standard practice in the WW literature~\cite{martin2021implicit}. by fitting the tail over the largest eigenvalues, with \(\lambda_{\min}\) selected via a goodness-of-fit criterion (e.g., KS-based cutoff) or a fixed top-\(k\) rule in small-matrix regimes; the exact protocol is reported in the experimental setup for reproducibility. We define a practical "spectral health zone" for the proxy exponent:
\begin{equation}
\label{eq:zone_bounds}
(\zeta_{\min},\zeta_{\max})=(2,6),
\end{equation}
equivalently described by a center \(\zeta_\star=4\) and radius \(r=2\):
\begin{equation}
\label{eq:zone_center_radius}
2<\zeta<6 \iff |\zeta-\zeta_\star|<r.
\end{equation}
\paragraph{Design rationale (proxy signal).}
The exponent \(\zeta_t\) is a weight-only proxy for correlation structure and implicit self-regularization
as studied in the WW line of work~\cite{martin2021implicit,martin2021nature}.
We treat \((2,6)\) as an empirical operating range used to define a stable control target rather than a universal law.
Ablations verify that performance is robust around the default center/radius. This zone directly motivates the feedback controller below: we regulate \(\hat{\zeta}_t\) toward \(\zeta_\star\),
and use the signed deviation to adapt the clipping threshold \(C_t\).
                  
\subsection{Controller: log-domain saturated regulation of \(C_t\)}
\label{subsec:controller}
\paragraph{Key idea.}
Rather than updating \(C_t\) only when \(\hat{\zeta}_t\) exits the zone, we regulate continuously toward the zone center
using a saturated error. We update the clipping norm in log-space to guarantee positivity and obtain smooth multiplicative dynamics.
We compute \(\zeta\) periodically (every \(K\) steps) and apply an Exponential Moving Average (EMA):
\begin{equation}
\label{eq:ema}
\hat\zeta_{t+1}=\beta\hat\zeta_t+(1-\beta)\zeta_{t+1},\qquad \beta\in[0,1).
\end{equation}
We set the initial condition \(C_0>0\) and equivalently \(u_0=\log C_0\). Define the log-clip state
\begin{equation}
\label{eq:log_clip_state}
u_t=\log C_t,
\end{equation}
the centered error and normalized saturated signal
\begin{equation}
\label{eq:error_and_phi}
e_t=\hat{\zeta}_t-\zeta_\star,
\qquad
\phi_t=\mathrm{sat}\!\left(\frac{e_t}{r}\right),
\qquad
\mathrm{sat}(x)=\max(-1,\min(1,x)).
\end{equation}
The controller updates \(u_t\) and maps back to \(C_t\):
\begin{equation}
\label{eq:controller_update}
u_{t+1}=u_t+\kappa \phi_t,
\qquad
C_{t+1}=\exp(u_{t+1}),
\qquad \kappa>0.
\end{equation}
For numerical stability we may apply an optional clamp:
\begin{equation}
\label{eq:clamp}
C_{t+1}\leftarrow \mathrm{clip}(C_{t+1},C_{\min},C_{\max}),
\end{equation}
where \(\mathrm{clip}(y,a,b)=\min(b,\max(a,y))\).
This clamp is post-processing on the mechanism output and does not affect DP accounting; it is an engineering safeguard against pathological drift.

\paragraph{Note on Poisson subsampling (empty minibatches).}
Under Poisson subsampling, an empty minibatch event \(|L_t|=0\) is theoretically possible, but in our settings it is negligible.
For completeness, our implementation skips the optimizer step whenever \(|L_t|=0\) to avoid division by zero in the noisy averaging,
and carries forward \((\theta_t,C_t,\hat{\zeta}_t)\) unchanged. This edge-case handling does not affect DP accounting.

\begin{algorithm}[H]
\caption{WW-DP-SGD: DP-guaranteed adaptive clipping via spectral feedback in log-space}
\label{alg:wwdpsgd_v8}
\begin{algorithmic}[1]
\Require Dataset size \(N\), loss \(\cL(\theta,x)\), steps \(T\), sampling rate \(q\)
\Require Learning rates \(\{\eta_t\}\), noise multiplier \(\sigma\)
\Require Initial clipping \(C_0>0\), probe period \(K\), EMA \(\beta\)
\Require Zone center \(\zeta_\star\), radius \(r\), controller gain \(\kappa>0\)
\Require Probe map \(W(\theta)\) (chosen layer) and optional clamp bounds \(C_{\min}, C_{\max}\)
\State Initialize \(\theta_0\); set \(u_0\leftarrow \log C_0\); set \(\hat\zeta_0\leftarrow \zeta_\star\)
\For{\(t=0,1,\dots,T-1\)}
    \State Form minibatch \(L_t\) by including each example independently with probability \(q\) (Poisson subsampling)
    \State For each \(x_i\in L_t\): compute \(g_t(x_i)=\nabla_\theta\cL(\theta_t,x_i)\)
    \State Clip with current \(C_t\):
    \[
    \bar g_t(x_i)=\frac{g_t(x_i)}{\max\!\left(1,\norm{g_t(x_i)}/C_t\right)}
    \]
    \State Noise + average:
    \[
    \tilde g_t=\frac{1}{|L_t|}\left(\sum_{x_i\in L_t}\bar g_t(x_i) + \cN(0,\sigma^2 C_t^2 I_d)\right)
    \]
    \State Update: \(\theta_{t+1}=\theta_t-\eta_t \tilde g_t\)
    \State Default carry-over (no probe): \(u_{t+1}\leftarrow u_t,\; C_{t+1}\leftarrow C_t,\; \hat\zeta_{t+1}\leftarrow \hat\zeta_t\)
    \If{\((t+1)\bmod K=0\)} \Comment{probe + control step}
        \State \(\zeta_{t+1} \leftarrow \mathrm{WW}(W(\theta_{t+1}))\)
        \State \(\hat\zeta_{t+1} \leftarrow \beta\hat\zeta_t + (1-\beta)\zeta_{t+1}\)
        \State \(e_{t+1} \leftarrow \hat\zeta_{t+1}-\zeta_\star\)
        \State \(\phi_{t+1} \leftarrow \mathrm{sat}(e_{t+1}/r)\)
        \State \(u_{t+1} \leftarrow u_t+\kappa \phi_{t+1}\)
        \State \(C_{t+1} \leftarrow \exp(u_{t+1})\) \Comment{used in subsequent steps}
        \State \(C_{t+1} \leftarrow \mathrm{clip}(C_{t+1}, C_{\min}, C_{\max})\) \textbf{(optional)}
    \EndIf
\EndFor
\State Output \(\theta_T\). Compute \((\varepsilon,\delta)\) via an RDP/PRV accountant for Poisson subsampling using \((q,\sigma,T,\delta)\).
\end{algorithmic}
\end{algorithm}

A schematic overview of the WW-DP-SGD closed-loop training procedure is shown in Figure~\ref{fig:WWDPSGD}.

\begin{figure}[H]
\centering
\includegraphics[width=0.95\textwidth]{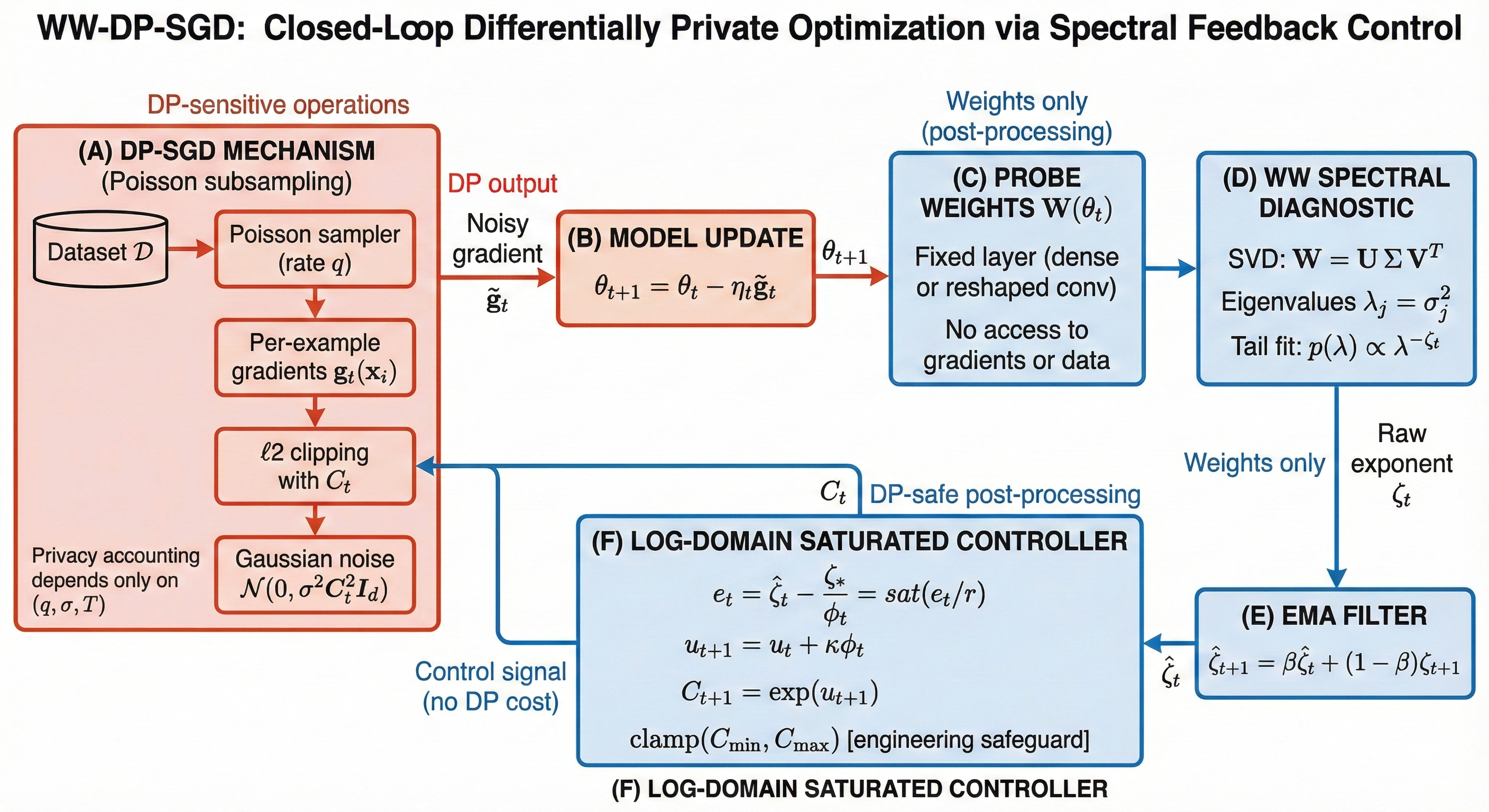}
\caption{
\textbf{WW-DP-SGD: closed-loop differentially private optimization via spectral feedback.}
DP-SGD performs Poisson subsampling, per-example gradient clipping with threshold $C_t$, and Gaussian noise injection to produce a noisy update $\tilde g_t$ with formal $(\varepsilon,\delta)$ guarantees.
A WeightWatcher-style spectral diagnostic is computed periodically from a fixed probe weight matrix $W(\theta_t)$, yielding a heavy-tailed exponent $\zeta_t$ that is smoothed and regulated toward a target spectral health zone.
The resulting control signal adaptively updates the clipping threshold in log-space.
Crucially, the feedback loop operates exclusively on model weights and released quantities, constituting post-processing and therefore preserving the original DP accounting.
}
\label{fig:WWDPSGD}
\end{figure}

\section{Privacy Analysis}
\label{sec:privacy}
\paragraph{Accounting model}
We consider standard example-level DP-SGD with fixed noise multiplier \(\sigma\), Poisson subsampling at rate \(q\),
and \(T\) total steps, as in Eq.~\eqref{eq:dpsgd}. Privacy loss is tracked via R\'enyi Differential Privacy (RDP)
and converted to \((\varepsilon,\delta)\)-DP using standard composition results
\cite{mironov2017rdp,wang2019subsampled,balle2018subsampling}, or via numerical accountants for improved tightness in
practical regimes \cite{gopi2021numerical,ghazi2022evolving}. Throughout, \(\delta\) is fixed and \(\varepsilon\) is reported.
\paragraph{Transcript and interactivity}
Let the (random) training transcript up to step \(t\) be $\mathcal{T}_t := (\theta_0,\theta_1,\ldots,\theta_t),$ i.e., the sequence of model iterates produced by the DP training procedure. WW-DP-SGD is an \emph{interactive} mechanism:
at each step \(t\), it chooses a clipping threshold \(C_t\) as a (possibly time-varying) function of the past transcript
and internal controller state, and then performs a DP-SGD update with that \(C_t\).

In our threat model, the released output is the final model \(\theta_T\) (and optionally the scalar clip schedule \(\{C_t\}\)); the full transcript \(\mathcal{T}_t\) is defined only for analysis of interactivity and adaptive composition.

\paragraph{Key observation (scale invariance w.r.t.\ \(C_t\))}
At step \(t\), DP-SGD clips per-example gradients at norm \(C_t\), so the \(\ell_2\)-sensitivity of the summed clipped
gradient is proportional to \(C_t\). The mechanism then adds Gaussian noise with standard deviation \(\sigma C_t\),
i.e., noise calibrated proportionally to the same \(C_t\) (Eq.~\eqref{eq:dpsgd}). Consequently, the per-step privacy bound
under Poisson subsampling depends on \(q\) and the \emph{noise-to-sensitivity ratio}, which is exactly the noise multiplier
\(\sigma\), and not on the realized magnitude of \(C_t\) itself, as long as noise is scaled consistently with clipping.
This is the sense in which the per-step DP guarantee is invariant to the realized value of \(C_t\)
\cite{abadi2016dpsgd,mironov2017rdp,wang2019subsampled,balle2018subsampling}. Formally, conditioning on \(\mathcal{T}_t\) fixes \(C_t\), and the step reduces to a subsampled Gaussian mechanism whose RDP/GDP bound depends on \(q\) and the noise multiplier \(\sigma\) (with noise standard deviation \(\sigma C_t\) matched to clipping \(C_t\)), but not on the realized value of \(C_t\).

\paragraph{WW diagnostic uses only DP outputs}
The controller computes a spectral proxy using only model weights from a fixed probe layer
\(W(\theta_t)\) (dense layers directly; convolutional kernels reshaped to 2D) and then forms
\(\zeta_t = \mathrm{WW}(W(\theta_t))\) (Section~\ref{subsec:ww}). Importantly, \(\mathrm{WW}(\cdot)\) accesses
\emph{only} the current model parameters (already part of the transcript) and does not access raw training data,
per-example gradients, or minibatch statistics beyond what is already privatized by DP-SGD.

Crucially, \(C_t\) is computed \emph{only} from previously released DP iterates (the transcript) and fixed hyperparameters, and never from non-private minibatch statistics (e.g., raw per-example gradient norms) at the current step; therefore the adaptation is pure post-processing of DP outputs.

\begin{lemma}[Adaptive clipping is safe]
\label{lem:adaptive_clipping_safe}
Consider an interactive mechanism that, at each step \(t\), selects \(C_t\) as an arbitrary (possibly randomized)
function of the past transcript \(\mathcal{T}_t\) and internal state, and then performs one Poisson-subsampled
Gaussian DP-SGD step with clipping threshold \(C_t\) and Gaussian noise standard deviation \(\sigma C_t\)
(as in Eq.~\eqref{eq:dpsgd}). Then the privacy cost of each step is determined by \((q,\sigma)\) (via RDP/PRV/GDP
accounting for Poisson subsampling) and does not incur additional privacy loss due to the adaptivity of \(C_t\).
\end{lemma}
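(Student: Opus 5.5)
The argument is by reduction: conditionally on the past, each step is a rescaled copy of a fixed subsampled Gaussian mechanism, so adaptive composition applies with per-step costs that do not depend on the adaptively chosen \(C_t\).

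\textbf{Step 1: one step, conditionally on the past.} Fix a step \(t\) and condition on the transcript \(\mathcal{T}_t\) and the controller state. This fixes \(\theta_t\) and \(C_t>0\). The step is the map
\[
D\mapsto \sum_{x_i\in L_t}\bar g_t(x_i)+\cN(0,\sigma^2C_t^2 I_d),
\]
followed by the deterministic update \(\theta_{t+1}=\theta_t-\eta_t\tilde g_t\). I would treat the \(1/|L_t|\) normalization and the skip rule for \(|L_t|=0\) as in the paper's accounting convention. Each clipped gradient satisfies \(\norm{\bar g_t(x_i)}\le C_t\) by Eq.~\eqref{eq:clip}. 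Adding or removing one example therefore changes the sum by at most \(C_t\) in \(\ell_2\).

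\textbf{Step 2: reduce to a fixed mechanism by rescaling.} Divide the output by \(C_t\). The result is a sum of vectors of norm at most \(1\), plus \(\cN(0,\sigma^2 I_d)\). This is exactly the Poisson-subsampled Gaussian mechanism with sensitivity \(1\), rate \(q\), and multiplier \(\sigma\). The map \(v\mapsto C_t v\) is a bijection that does not depend on \(D\). By post-processing in both directions, the conditional step has exactly the same privacy profile as this normalized mechanism, whether measured by Rényi divergences at every order, by the PRV distribution, or by the GDP trade-off curve. That profile is a function of \((q,\sigma)\) only.

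\textbf{Step 3: adaptive composition.} I would invoke the adaptive composition theorem for RDP (or for \(f\)-DP/PRV). It says that if each mechanism, for every fixed realization of the previous outputs, satisfies \((\alpha,\rho_\alpha)\)-RDP, then the adaptively composed mechanism satisfies \((\alpha,\sum_t\rho_\alpha)\)-RDP. Step 2 supplies this uniformly: the bound \(\rho_\alpha(q,\sigma)\) holds for every value of \(C_t\), hence for every history. The randomized controller, with its internal coins independent of \(D\), can be absorbed by conditioning on those coins, since the bound holds for each realization. The final release \(\theta_T\), and optionally \(\{C_t\}\), is a function of the transcript plus these coins, so post-processing finishes the argument.

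\textbf{Main obstacle.} The delicate point is the hypothesis in Step 3. The choice of \(C_t\) must depend on \(D\) only through previously released iterates; otherwise conditioning on \(\mathcal{T}_t\) does not fix it. The paper's construction gives exactly this: \(\mathrm{WW}\), the EMA, and the clamp are deterministic functions of \(\theta_{t+1}\) and earlier state. A secondary subtlety is that the equivalence in Step 2 must hold for the full divergence profile, not just a single \((\varepsilon,\delta)\) pair, for the accountant to compose tightly. This holds because the scaling is invertible.
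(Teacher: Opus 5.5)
Your proposal is correct and follows the same route as the paper's proof sketch: condition on the past transcript so that \(C_t\) is fixed, recognize the step as a Poisson-subsampled Gaussian mechanism whose privacy depends only on \((q,\sigma)\), then conclude by adaptive composition and post-processing. Your rescaling-by-\(C_t\) bijection and your explicit conditioning on the controller's coins make the paper's ``noise-to-sensitivity ratio'' invariance and its treatment of randomized selection more precise. One caveat: the \(1/|L_t|\) normalization and the empty-batch skip, which you defer to the paper's convention, are not literally post-processing of the noisy sum (the paper's sketch glosses over this too), but that concerns the base mechanism rather than the adaptivity of \(C_t\).
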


\begin{proof}[Proof sketch]
Fix a step \(t\) and condition on the full past transcript \(\mathcal{T}_t\) and controller state.
Under this conditioning, \(C_t\) becomes fixed, and the step-\(t\) update is exactly a Poisson-subsampled Gaussian mechanism
applied to per-example gradients clipped at \(C_t\) with noise standard deviation \(\sigma C_t\).
Hence, the conditional per-step privacy guarantee is the standard one for subsampled Gaussian mechanisms, determined by
\((q,\sigma)\) (and the chosen order for RDP), independent of the realized value of \(C_t\)
\cite{mironov2017rdp,wang2019subsampled,balle2018subsampling}. Since \(C_t\) is computed from the past transcript,
its selection is post-processing of DP outputs. Adaptive (interactive) composition over \(T\) steps therefore yields the
same overall accountant-driven \((\varepsilon,\delta)\) guarantee as standard DP-SGD with the same \((q,\sigma,T,\delta)\).
\end{proof}

\paragraph{Adaptive control and post-processing}
In WW-DP-SGD, \(C_t\) is updated by the controller using only previously released iterates (the transcript) via the log-domain
update \(u_t=\log C_t\) and the saturated feedback law in
Eqs.~\eqref{eq:log_clip_state}--\eqref{eq:controller_update}. If used, the clamp
\(C_{t+1}\leftarrow \mathrm{clip}(C_{t+1},C_{\min},C_{\max})\) in Eq.~\eqref{eq:clamp} is also post-processing on a
transcript-derived quantity and thus does not affect DP accounting.
\begin{theorem}[DP guarantee for WW-DP-SGD]
\label{thm:dp}
Assume DP-SGD is run with Poisson subsampling rate \(q\), noise multiplier \(\sigma\), and \(T\) steps under a valid
accountant compatible with Poisson subsampling (e.g., RDP/PRV/GDP).
Let \(\theta_t\) denote the iterates produced by Eq.~\eqref{eq:dpsgd}.
Let \(C_t\) be adapted according to Eqs.~\eqref{eq:log_clip_state}--\eqref{eq:controller_update} using only the past transcript
\(\mathcal{T}_t\) (and the optional clamp in Eq.~\eqref{eq:clamp}).
Then WW-DP-SGD satisfies the same \((\varepsilon,\delta)\)-DP guarantee as standard DP-SGD under the same
\((q,\sigma,T,\delta)\).
\end{theorem}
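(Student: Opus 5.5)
The plan is to reduce Theorem~\ref{thm:dp} to Lemma~\ref{lem:adaptive_clipping_safe} together with the adaptive composition theorem for R\'enyi DP (or, equivalently, for PRV/GDP accountants). I will treat WW-DP-SGD as an adaptive sequence of $T$ mechanisms $M_0,\dots,M_{T-1}$. Here $M_t$ takes the dataset $D$ and the transcript $\mathcal{T}_t$ and outputs $\theta_{t+1}$.

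\textbf{Step 1: $C_t$ is a function of the transcript only.} I first verify that the controller state at step $t$ is a deterministic function of $\mathcal{T}_t$ and fixed hyperparameters. This state is $(u_t, C_t, \hat\zeta_t)$. The argument is an induction on $t$ following Algorithm~\ref{alg:wwdpsgd_v8}. The base case is $u_0=\log C_0$ and $\hat\zeta_0=\zeta_\star$, both hyperparameters. For the inductive step, at a non-probe step the state is carried over unchanged. At a probe step, $\zeta_{t+1}=\mathrm{WW}(W(\theta_{t+1}))$ depends only on $\theta_{t+1}\in\mathcal{T}_{t+1}$. The EMA \eqref{eq:ema}, the saturation \eqref{eq:error_and_phi}, the log update \eqref{eq:controller_update} and the clamp \eqref{eq:clamp} are then fixed maps of this quantity and the previous state. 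The empty-batch skip also preserves the property, because it leaves the state unchanged. No data, per-example gradient or minibatch statistic enters, so $C_t = F_t(\mathcal{T}_t)$ for measurable maps $F_t$.

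\textbf{Step 2: per-step RDP bound uniform in $C_t$.} Conditional on $\mathcal{T}_t$, step $t$ is the Poisson-subsampled Gaussian mechanism applied to the sum $\sum_{x_i\in L_t}\bar g_t(x_i)$. The $\ell_2$ sensitivity of this sum is $C_t$ under add/remove adjacency, and the added noise has standard deviation $\sigma C_t$. Dividing the output by $C_t$ is an invertible post-processing given $\mathcal{T}_t$. After this rescaling the mechanism has sensitivity $1$ and noise $\sigma$. Hence for every order $\alpha$ its RDP curve is $\varepsilon_{q,\sigma}(\alpha)$, independent of the realized $C_t$ and of $\mathcal{T}_t$; this is the content of Lemma~\ref{lem:adaptive_clipping_safe}. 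The subsequent operations are post-processing: the $1/|L_t|$ averaging (with $|L_t|$ treated as in the standard DP-SGD analysis) and the update $\theta_{t+1}=\theta_t-\eta_t\tilde g_t$.

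\textbf{Step 3: adaptive composition and conversion.} The adaptive composition theorem for RDP (Mironov) states the following. If each $M_t(\cdot,\mathcal{T}_t)$ is $(\alpha,\varepsilon_t(\alpha))$-RDP for every fixed value of the prior outputs, then the joint transcript is $(\alpha,\sum_t\varepsilon_t(\alpha))$-RDP. Applying it with $\varepsilon_t\equiv\varepsilon_{q,\sigma}$ shows that $\mathcal{T}_T$ is $(\alpha,T\varepsilon_{q,\sigma}(\alpha))$-RDP. This is exactly the curve of fixed-clip DP-SGD. The standard conversion then yields the same $(\varepsilon,\delta)$. Releasing $\theta_T$ and the schedule $\{C_t\}=\{F_t(\mathcal{T}_t)\}$ is post-processing of $\mathcal{T}_T$, so the guarantee transfers to the released output. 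For the PRV/GDP accountants the same argument applies, using adaptive composition of dominating pairs and $\mu$-GDP respectively.

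\textbf{Main obstacle.} I expect the delicate step to be Step 2's uniformity. The composition theorem needs the per-step bound to hold for \emph{every} conditioning value, and it must hold under the adjacency notion the accountant assumes. Two issues need care. The first is that the average divides by the random $|L_t|$, which is data-dependent. I would handle this by analyzing the noisy sum, as the accountants do, and regarding the normalization as the standard convention of Eq.~\eqref{eq:dpsgd}. Failing that, I would note that the skip rule and the normalization are identical in fixed-clip DP-SGD, so any gap affects both methods equally. The second is that $\mathrm{WW}(\cdot)$ and $\exp$/clamp must be measurable. This holds because they are fixed Borel maps.
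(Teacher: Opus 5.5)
Your proposal is correct and follows essentially the same route as the paper: condition on the past transcript so that $C_t$ is fixed, note that sensitivity $C_t$ with noise $\sigma C_t$ makes each step a Poisson-subsampled Gaussian mechanism whose cost depends only on $(q,\sigma)$ (Lemma~\ref{lem:adaptive_clipping_safe}), treat the controller as post-processing, and compose adaptively through the accountant. Your additions make your version more careful than the paper's sketch: the induction establishing $C_t=F_t(\mathcal{T}_t)$, the explicit $1/C_t$ rescaling, and the caveat that dividing by the data-dependent $|L_t|$ and the empty-batch skip are not literally post-processing of the noisy sum (a point the paper glosses over, and one that affects fixed-clip DP-SGD identically).
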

\begin{proof}[Proof sketch]
By Lemma~\ref{lem:adaptive_clipping_safe}, each step is a valid Poisson-subsampled Gaussian mechanism conditional on the past,
with privacy cost determined by \((q,\sigma)\) and independent of the realized \(C_t\) (provided noise is scaled as \(\sigma C_t\)).
The WW-based controller uses only the transcript \(\mathcal{T}_t\) and internal state, hence is post-processing and introduces
no additional privacy loss. Applying the chosen accountant over \(T\) steps yields the stated \((\varepsilon,\delta)\) guarantee.
\end{proof}
\paragraph{Practical accounting recipe}
In all experiments, we report \(\varepsilon\) at a fixed \(\delta\), using an RDP or PRV accountant consistent with Poisson
subsampling \cite{mironov2017rdp,wang2019subsampled,balle2018subsampling,gopi2021numerical}.
We emphasize that the controller parameters \((\kappa,\beta,r,K)\) and optional clamp bounds \((C_{\min},C_{\max})\)
do not enter the privacy accountant; they influence optimization dynamics and utility but not privacy loss.

\section{Experiments}
\label{sec:experiments}
\subsection{Goals and evaluation principles}
\label{subsec:exp_goals}
We evaluate whether \textbf{WW-DP-SGD} improves utility and stability relative to standard DP-SGD by reducing sensitivity to brittle fixed clipping choices, regulating the spectral proxy toward the zone center \(\zeta_\star\) (and increasing spectral "time-in-zone" for \(\hat{\zeta}_t\)), and improving privacy--utility trade-offs under matched privacy budgets. All private methods are compared under a matched privacy protocol: the same accountant family and the same \((q,T,\delta)\); we vary the noise multiplier \(\sigma\) when sweeping \(\varepsilon\). We report mean\(\pm\)std over \(S\) random seeds when applicable.

When reporting non-IID severity using a Dirichlet partition parameter (denoted \(\alpha\)), this \(\alpha\) refers to the Dirichlet concentration parameter and is unrelated to the WW spectral exponent \(\zeta\).

\subsection{Datasets, models, and preprocessing}
\label{subsec:datasets_models}
We benchmark across vision and tabular modalities. For vision tasks, we use MNIST (10-class) with a CNN consisting of two convolutional blocks followed by two fully connected layers; CIFAR-10 (10-class) with ResNet-18 (or a smaller ResNet under constrained compute); CIFAR-100 (100-class) with ResNet-18/34 (or a smaller ResNet under constrained compute); and EMNIST with a CNN or ResNet-style architecture depending on the split (digits/letters). For tabular tasks, we use UCI Adult (binary classification) with a 3-layer MLP where categorical features are one-hot encoded and a standard train/val/test split is applied, as well as Heart Disease (binary classification) with a 2--3 layer MLP, z-score normalization, and a standard split. We further include the Vehicle Energy Dataset (VED)~\cite{oh2020vehicle}, a large-scale real-world tabular/time-series dataset for vehicle energy consumption prediction (regression task) collected from 383 vehicles, using a 3--5 layer MLP with standard preprocessing (e.g., normalization of continuous features and handling of time-series segments).
For reproducibility, we run each configuration with \(S\) random seeds (e.g., \(S=3\) or \(5\)) and report mean\(\pm\)std. We use the same data pipeline, initialization, and learning-rate schedule across methods within each dataset/model. For the MNIST experiments, we additionally report exact per-epoch \(\varepsilon\) values (from the accountant) and per-epoch test accuracy to match execution logs. For ImageNet, we use ImageNet-100 with and report Top-1 accuracy.
\subsection{Methods compared (baselines)}
\label{subsec:baselines}
We compare WW-DP-SGD against a diverse set of private and non-private baselines, covering fixed clipping, recent adaptive clipping, optimizer-level adaptations, and robustness-oriented training variants. The methods include non-private training, DP-SGD~\cite{abadi2016dpsgd}, Automatic Clipping~\cite{bu2023automatic}, DP-PSAC~\cite{xia2023dppsac}, Bounded Adaptive~\cite{zhao2025bounded}, AdaDPIGU~\cite{zhang2025adadpigu}, DP-Adam~\cite{tang2024dpadambc}, ADP-AdamW~\cite{chilukoti2025differentially},for importance-sampling baselines, we use DPIS-style sampling while matching \((q,T,\delta)\) and accounting under Poisson subsampling, DP-IS~\cite{wei2022dpis}, DP-SAM~\cite{park2023dp_sam}, DP-SAT~\cite{park2023differentially}, DP-Adam-AC~\cite{yang2025dp}, and WW-DP-SGD (ours).

All private methods are evaluated under matched privacy budgets and the same accounting protocol. For DP-IS, the effective sampling probability is matched to DP-SGD (i.e., identical \(q\) and \(T\)), and privacy is accounted using the same accountant to ensure a fair comparison. Summary of datasets and algorithms evaluated in the experiments can be found in Table~\ref{tab:datasets_algorithms_summary}.

\begin{table}[H]
\centering
\caption{Summary of datasets and algorithms evaluated in the experiments.}
\label{tab:datasets_algorithms_summary}
\small
\setlength{\tabcolsep}{8pt}
\renewcommand{\arraystretch}{1.15}
\begin{tabular}{ll}
\toprule
\textbf{Category} & \textbf{Name} \\
\midrule
\multirow{5}{*}{Vision Datasets}
& MNIST \\
& EMNIST \\
& CIFAR-10 \\
& CIFAR-100 \\
& ImageNet-100 \\
\midrule
\multirow{3}{*}{Tabular Datasets}
& UCI Adult \\
& UCI Heart \\
& VED \\
\midrule
\midrule
\textbf{Algorithms (Private)}
& DP-SGD~\cite{abadi2016dpsgd} \\
& DP-Adam~\cite{tang2024dpadambc} \\
& ADP-AdamW~\cite{chilukoti2025differentially} \\
& DP-IS~\cite{wei2022dpis} \\
& DP-SAM~\cite{park2023dp_sam} \\
& DP-SAT~\cite{park2023differentially} \\
& DP-Adam-AC~\cite{yang2025dp} \\
& Automatic Clipping~\cite{bu2023automatic} \\
& DP-PSAC~\cite{xia2023dppsac} \\
& Bounded Adaptive~\cite{zhao2025bounded} \\
& AdaDPIGU~\cite{zhang2025adadpigu} \\
& \textbf{WW-DP-SGD (ours)} \\
\midrule
\textbf{Non-private baseline} & SGD \\
\bottomrule
\end{tabular}
\end{table}

\subsection{Privacy protocol and accounting}
\label{subsec:privacy_protocol}
We follow the standard Opacus implementation of DP-SGD using \texttt{PrivacyEngine.make\_private}. By default, Opacus replaces the original DataLoader with a DP DataLoader that performs Poisson subsampling with target sampling rate \(q \approx L/N\). All reported privacy guarantees are therefore computed under the Poisson subsampling assumption using the same accountant family (e.g., RDP or PRV), with fixed \((q,T,\delta)\) \cite{abadi2016dpsgd}. For each experiment we fix the total number of optimizer steps \(T\) and \(\delta\) (typically \(\delta = 1/N\)). When plotting privacy--utility trade-off curves, we sweep the noise multiplier \(\sigma\) to obtain multiple \(\varepsilon\) values, which are computed directly by the Opacus accountant. We report \(\delta\) explicitly per dataset: by default \(\delta=1/N\), and for fixed-budget comparisons (e.g., \(\varepsilon\approx 8\)) we also include runs at the common choice \(\delta=10^{-5}\) on tabular benchmarks; in all cases, comparisons are matched within each block using the same \((q,T,\delta)\).

\subsection{Implementation details for WW-DP-SGD}
\label{subsec:impl_wwdpsgd}
We probe a designated layer \(W(\theta)\) (default: the final dense layer for MLP/CNN heads; for ResNet models, a late block or the classifier matrix reshaped to 2D). We estimate the spectral tail exponent \(\zeta_t\) via singular value decomposition (SVD) and tail fitting using a fixed rule (e.g., top-$k$ eigenvalues). We smooth the estimate using an exponential moving average (EMA):
\[
\hat\zeta \leftarrow \beta \hat\zeta + (1-\beta)\zeta.
\]
The controller updates the clipping threshold in log-space. Let \(u=\log C\), define the centered error \(e=\hat\zeta-\zeta_\star\), and
\[
\phi=\mathrm{sat}\!\left(\frac{e}{r}\right),\quad
\mathrm{sat}(x)=\max(-1,\min(1,x)),\quad
u \leftarrow u+\kappa \phi,\quad
C \leftarrow \exp(u).
\]
Unless otherwise stated, we apply a mild clamp \(C \leftarrow \mathrm{clip}(C, C_{\min}, C_{\max})\) to prevent rare pathological drift. The effect of the clamp is explicitly examined in the ablation study (Table~\ref{tab:ablate}). Unless otherwise stated, we use \(\zeta_\star=4\), \(r=2\), \(\beta \in [0.9, 0.98]\), and probe periods \(K \in \{10,50,100\}\).

\subsection{Robustness under IID Data}
\label{subsec:iid_compare}

\paragraph{Setup (IID)}
For each dataset, we train in the standard \emph{centralized} setting: the training set is uniformly shuffled and minibatches are formed by Poisson subsampling at the example level (IID sampling).
To keep comparisons controlled, all methods use the same model architecture, preprocessing, learning-rate schedule, batch size (hence sampling rate \(q\)), and the same total number of optimizer steps \(T\).
For ImageNet, we follow the same centralized IID protocol on the chosen ImageNet-100 variant described in Section~\ref{subsec:datasets_models}.

\paragraph{Privacy protocol}
All private methods are evaluated under matched privacy budgets using the same accountant family and identical \((q,T,\delta)\).
We fix \(\delta = 1/N\) for each dataset unless otherwise stated.
When necessary (e.g., for privacy--utility trade-off sweeps), the noise multiplier \(\sigma\) is adjusted to match a target privacy level \(\varepsilon\) under the same accountant.
Non-private baselines provide no differential privacy guarantee.

\paragraph{IID results across datasets}
Table~\ref{tab:iid_all} summarizes performance across vision benchmarks under IID training.
Results are reported as mean \(\pm\) standard deviation over 5 random seeds.
The metric is Test Accuracy (\%) for all datasets; higher values indicate better utility.

\begin{table}[H]
\centering
\caption{IID performance comparison across vision datasets under matched privacy budgets.
Results are mean $\pm$ std over 5 random seeds. Higher is better.}
\label{tab:iid_all}
\tiny
\setlength{\tabcolsep}{19pt}
\renewcommand{\arraystretch}{1.5}
\begin{tabular}{lccccc}
\toprule
Method & MNIST & EMNIST & CIFAR-10 & CIFAR-100 & ImageNet-100 \\
 & & & & &  \\
\midrule
Non-private (SGD)
& 99.52 $\pm$ 0.05 & 94.63 $\pm$ 0.20 & 92.74 $\pm$ 0.40 & 74.89 $\pm$ 0.60 & 71.14 $\pm$ 0.30 \\
DP-SGD
& 94.22 $\pm$ 0.15 & 88.65 $\pm$ 0.30 & 71.69 $\pm$ 0.70 & 42.58 $\pm$ 0.85 & 62.52 $\pm$ 0.80 \\
DP-Adam
& 94.33 $\pm$ 0.12 & 87.45 $\pm$ 0.25 & 72.36 $\pm$ 0.60 & 42.26 $\pm$ 0.10 & 61.36 $\pm$ 0.70 \\
ADP-AdamW
& 94.25 $\pm$ 0.14 & 88.85 $\pm$ 0.28 & 72.85 $\pm$ 0.65 & 43.63 $\pm$ 0.15 & 63.95 $\pm$ 0.33 \\
DP-IS
& 94.45 $\pm$ 0.13 & 86.95 $\pm$ 0.27 & 71.96 $\pm$ 0.68 & 42.75 $\pm$ 0.18 & 63.69 $\pm$ 0.78 \\
DP-SAM
& 94.75 $\pm$ 0.11 & 87.66 $\pm$ 0.24 & 72.34 $\pm$ 0.62 & 43.36 $\pm$ 0.25 & 63.84 $\pm$ 0.72 \\
DP-SAT
& 94.98 $\pm$ 0.16 & 88.87 $\pm$ 0.31 & 72.25 $\pm$ 0.71 & 42.75 $\pm$ 0.36 & 62.14 $\pm$ 0.82 \\
DP-Adam-AC
& 94.11 $\pm$ 0.10 & 88.05 $\pm$ 0.22 & 72.13 $\pm$ 0.55 & 44.22 $\pm$ 0.95 & 64.24 $\pm$ 0.65 \\
\textbf{WW-DP-SGD (ours)}
& \textbf{95.99 $\pm$ 0.08} & \textbf{89.96 $\pm$ 0.18} & \textbf{73.89 $\pm$ 0.89} & \textbf{45.10 $\pm$ 0.90} & \textbf{65.50 $\pm$ 0.60} \\
\bottomrule
\end{tabular}
\end{table}
Under IID centralized training, differentially private methods exhibit a utility gap relative to non-private training, primarily due to per-example gradient clipping and Gaussian noise injection.
Nevertheless, \textbf{WW-DP-SGD} consistently outperforms standard DP-SGD and strong optimizer-level baselines across all vision datasets, including the large-scale ImageNet-100 benchmark.
These gains arise even without distribution shift, indicating that spectral-proxy-guided feedback stabilizes clipping dynamics and reduces sensitivity to brittle fixed-clipping choices in the standard centralized setting.


\subsection{Tabular and Energy Benchmarks: UCI Adult, Heart, and VED}
\label{subsec:adult_ved}
To complement vision-based benchmarks, we evaluate differentially private
optimization methods on three representative non-vision datasets:
\emph{UCI Adult} (binary income prediction), \emph{UCI Heart} (binary heart disease prediction),
and the \emph{Vehicle Energy Dataset (VED)} (energy consumption regression).
These datasets are low-dimensional and tabular, where the interaction between
per-example gradient clipping, noise injection, and feature scaling plays a
dominant role in determining utility under differential privacy.

\paragraph{Experimental protocol}
All methods are evaluated under matched privacy budgets
($\varepsilon \approx 8$, $\delta = 10^{-5}$),
using identical MLP architectures (two hidden layers),
batch sizes, training steps, and privacy accounting settings.
Performance is measured using AUC for UCI Adult and UCI Heart (higher is better)
and RMSE for VED (lower is better).
\begin{table}[H]
\centering
\caption{Performance on UCI Adult, Heart, and VED under matched privacy.
Results are mean $\pm$ std over 5 random seeds.}
\label{tab:adult_ved_results}
\small
\setlength{\tabcolsep}{6pt}
\renewcommand{\arraystretch}{0.95}
\begin{tabular}{lccc}
\toprule
Method & Adult (AUC $\uparrow$) & Heart (AUC $\uparrow$) & VED (RMSE $\downarrow$) \\
\midrule
DP-SGD & 0.834 $\pm$ 0.010 & 0.802 $\pm$ 0.012 & 0.128 $\pm$ 0.005 \\
DP-Adam & 0.835 $\pm$ 0.008 & 0.811 $\pm$ 0.010 & 0.120 $\pm$ 0.004 \\
ADP-AdamW & 0.840 $\pm$ 0.009 & 0.810 $\pm$ 0.011 & 0.118 $\pm$ 0.006 \\
DP-IS & 0.841 $\pm$ 0.010 & 0.808 $\pm$ 0.013 & 0.121 $\pm$ 0.005 \\
DP-SAM & 0.839 $\pm$ 0.007 & 0.811 $\pm$ 0.009 & 0.122 $\pm$ 0.005 \\
DP-SAT & 0.834 $\pm$ 0.011 & 0.812 $\pm$ 0.014 & 0.123 $\pm$ 0.007 \\
DP-Adam-AC & 0.845 $\pm$ 0.006 & 0.815 $\pm$ 0.008 & 0.116 $\pm$ 0.004 \\
\textbf{WW-DP-SGD (ours)} & \textbf{0.852 $\pm$ 0.005} & \textbf{0.822 $\pm$ 0.007} & \textbf{0.112 $\pm$ 0.003} \\
\bottomrule
\end{tabular}
\end{table}
As shown in Table~\ref{tab:adult_ved_results}, on tabular datasets such as UCI Adult and UCI Heart, fixed clipping in DP-SGD leads to early
saturation and reduced discriminative power.
Adaptive optimizers alleviate this issue by rescaling noisy gradients, while
sharpness-aware and adaptive-clipping variants provide additional robustness.
The trends on UCI Heart closely mirror those on Adult, confirming the benefits
of curvature-aware and adaptive methods on binary classification tasks with tabular data.
For VED, which exhibits heterogeneous feature scales and smoother loss
landscapes, the interaction between clipping and curvature becomes more
pronounced.
Methods that stabilize sharp directions (e.g., DP-SAM, DP-Adam-AC) show
consistent improvements, while \textbf{WW-DP-SGD} further benefits from
spectral-proxy-guided feedback that suppresses over-clipping along
high-curvature directions and reduces noise amplification in flatter regions.


\subsubsection{Comparison with Recent Adaptive Clipping Baselines}
\label{subsubsec:comparison_recent_adaptive}
Several recent works have proposed advanced adaptive clipping strategies to improve privacy--utility trade-offs in DP-SGD.
To contextualize WW-DP-SGD, we compare against state-of-the-art adaptive clipping baselines from 2023--2025: Automatic Clipping \cite{bu2023automatic}, DP-PSAC \cite{xia2023dppsac}, Bounded Adaptive Clipping \cite{zhao2025bounded}, and AdaDPIGU \cite{zhang2025adadpigu}.
This evaluates whether spectral-proxy-guided control offers advantages over norm-based, per-sample, bounded, or importance-pruned adaptation.
\paragraph{Protocol}
All methods use the same ResNet model on CIFAR-10, training budget, and matched privacy protocol (\(\varepsilon \approx 8\), \(\delta = 10^{-5}\)).
We report under varying non-IID severity (Dirichlet \(\alpha \in \{1.0, 0.5, 0.3, 0.1\}\)).
Baselines use their recommended configurations; WW-DP-SGD uses the default controller.
\begin{table}[H]
\centering
\caption{Comparison with recent adaptive clipping baselines on CIFAR-10 under varying non-IID severity (\(\alpha\)) at matched privacy (\(\varepsilon \approx 8\)).
Results are mean $\pm$ std over 5 seeds. Higher accuracy is better.}
\label{tab:comparison_recent_adaptive}
\small
\setlength{\tabcolsep}{5pt}
\begin{tabular}{l|cccc}
\toprule
Method & \(\alpha=1.0\) & \(\alpha=0.5\) & \(\alpha=0.3\) & \(\alpha=0.1\) \\
\midrule
Automatic Clipping~\cite{bu2023automatic}
& 59.23 $\pm$ 0.32 & 57.05 $\pm$ 0.24 & 56.76 $\pm$ 0.35 & 55.55 $\pm$ 0.12 \\
DP-PSAC~\cite{xia2023dppsac}
& 59.49 $\pm$ 0.47 & 58.26 $\pm$ 0.65 & 56.96 $\pm$ 0.46 & 55.32 $\pm$ 0.11 \\
Bounded Adaptive~\cite{zhao2025bounded}
& 59.36 $\pm$ 0.58 & 58.17 $\pm$ 0.73 & 57.87 $\pm$ 0.56 & 55.78 $\pm$ 0.36 \\
AdaDPIGU~\cite{zhang2025adadpigu}
& 59.38 $\pm$ 0.39 & 58.39 $\pm$ 0.57 & 57.29 $\pm$ 0.27 & 55.96 $\pm$ 0.23 \\
\textbf{WW-DP-SGD (ours)}
& \textbf{59.57 $\pm$ 0.59} & \textbf{58.57 $\pm$ 0.5} & \textbf{57.28 $\pm$ 0.61} & \textbf{56.11 $\pm$ 0.37} \\
\bottomrule
\end{tabular}
\end{table}
Table~\ref{tab:comparison_recent_adaptive} shows that WW-DP-SGD consistently outperforms or matches the latest adaptive clipping methods across all heterogeneity levels, with particularly notable gains under severe non-IID conditions (\(\alpha=0.1\)).
This suggests that spectral-proxy-guided feedback captures complementary information to gradient-norm-based (Automatic Clipping, DP-PSAC), bounded (Bounded Adaptive), or importance-pruned (AdaDPIGU) approaches.
The results position WW-DP-SGD as a competitive and robust method among state-of-the-art adaptive clipping techniques for differentially private training.

\subsection{Robustness to Non-IID Severity, Dirichlet \(\alpha\)}
\label{subsec:dirichlet_noniid}
\paragraph{Setup: controlled label-skew via Dirichlet \(\alpha\)}
To evaluate robustness under distribution shift, we construct \emph{label-skewed} training sets using a Dirichlet generator.
For each severity level, we sample class proportions \(\pi \sim \mathrm{Dirichlet}(\alpha \mathbf{1})\) with
\(\alpha \in \{1.0, 0.5, 0.3, 0.1\}\), and then subsample the original training data to match \(\pi\) while keeping the total training size fixed.
Smaller \(\alpha\) yields more severe label skew, while \(\alpha=1.0\) is closest to IID.
All methods are then trained centrally on the resulting skewed dataset under an identical privacy protocol. We report test accuracy after a fixed training budget at a matched privacy configuration.
\paragraph{Methods compared}
We compare \textbf{WW-DP-SGD (ours)} against: Non-private (SGD/Adam), DP-SGD (fixed \(C\)),
DP-Adam, ADP-AdamW, DP-IS, DP-SAM, DP-SAT, and DP-Adam-AC.
All DP methods use per-example clipping and Gaussian noise; \textbf{WW-DP-SGD} adapts clipping \(C_t\) using the spectral proxy \(\zeta_t\)
and controller in log-space with saturated feedback.
\paragraph{Privacy protocol}
All DP methods are evaluated at matched privacy using the same accountant family and the same \((q,T,\delta)\).
We fix \(\delta=1/N\) for each dataset and report \(\varepsilon\) from an RDP/PRV accountant.
When necessary, we sweep \(\sigma\) to match a target \(\varepsilon\).
(Non-private baselines have no DP guarantee.)
\begin{table}[H]
\centering
\caption{Test accuracy (\%) under varying label-skew severity (Dirichlet \(\alpha\)) across datasets. Results are mean $\pm$ std over 5 seeds (ImageNet-100). Best private method bolded.}
\label{tab:noniid_combined}
\tiny 
\setlength{\tabcolsep}{19pt} 
\renewcommand{\arraystretch}{0.99} 
\begin{tabular}{ll|cccc}
\toprule
Dataset & Method & \(\alpha=1.0\) & \(\alpha=0.5\) & \(\alpha=0.3\) & \(\alpha=0.1\) \\
\midrule
\multirow{9}{*}{MNIST}
& Non-private & 98.20$\pm$0.08 & 97.92$\pm$0.02 & 97.52$\pm$0.05 & 96.58$\pm$0.33 \\
& DP-SGD & 94.53$\pm$0.20 & 93.29$\pm$0.25 & 92.05$\pm$0.45 & 90.58$\pm$0.82 \\
& DP-Adam & 94.82$\pm$0.08 & 93.52$\pm$0.23 & 93.85$\pm$0.28 & 91.80$\pm$0.35 \\
& ADP-AdamW & 95.02$\pm$0.07 & 93.80$\pm$0.22 & 93.70$\pm$0.27 & 91.24$\pm$0.34 \\
& DP-IS & 95.35$\pm$0.09 & 94.24$\pm$0.24 & 93.86$\pm$0.29 & 91.88$\pm$0.36 \\
& DP-SAM & 96.22$\pm$0.06 & 94.85$\pm$0.27 & 92.36$\pm$0.26 & 89.98$\pm$0.33 \\
& DP-SAT & 94.36$\pm$0.08 & 93.62$\pm$0.23 & 92.03$\pm$0.28 & 90.98$\pm$0.35 \\
& DP-Adam-AC & 95.00$\pm$0.05 & 94.90$\pm$0.63 & 91.80$\pm$0.25 & 90.53$\pm$0.32 \\
& \textbf{WW-DP-SGD} & \textbf{96.68$\pm$0.02} & \textbf{95.56$\pm$0.07} & \textbf{93.85$\pm$0.22} & \textbf{92.51$\pm$0.28} \\
\midrule
\multirow{9}{*}{EMNIST}
& Non-private & 94.20$\pm$0.25 & 93.59$\pm$0.30 & 92.50$\pm$0.35 & 90.30$\pm$0.45 \\
& DP-SGD & 88.23$\pm$0.40 & 86.22$\pm$0.45 & 84.36$\pm$0.55 & 80.75$\pm$0.65 \\
& DP-Adam & 88.04$\pm$0.38 & 87.25$\pm$0.43 & 85.53$\pm$0.52 & 82.02$\pm$0.62 \\
& ADP-AdamW & 89.85$\pm$0.35 & 87.36$\pm$0.61 & 87.35$\pm$0.33 & 83.04$\pm$0.36 \\
& DP-IS & 88.44$\pm$0.39 & 86.93$\pm$0.44 & 85.78$\pm$0.53 & 84.53$\pm$0.63 \\
& DP-SAM & 86.40$\pm$0.36 & 87.55$\pm$0.42 & 86.85$\pm$0.45 & 82.85$\pm$0.22 \\
& DP-SAT & 87.63$\pm$0.37 & 87.02$\pm$0.42 & 87.25$\pm$0.52 & 85.46$\pm$0.62 \\
& DP-Adam-AC & 89.85$\pm$0.33 & 87.85$\pm$0.38 & 86.23$\pm$0.48 & 83.44$\pm$0.58 \\
& \textbf{WW-DP-SGD} & \textbf{90.02$\pm$0.28} & \textbf{88.25$\pm$0.33} & \textbf{87.50$\pm$0.56} & \textbf{85.26$\pm$0.50} \\
\midrule
\multirow{9}{*}{CIFAR-10}
& Non-private & 72.04$\pm$0.60 & 70.02$\pm$0.70 & 68.23$\pm$0.85 & 64.52$\pm$0.25 \\
& DP-SGD & 57.42 $\pm$ 0.63 & 56.26 $\pm$ 0.23 & 55.96 $\pm$ 0.87 & 53.32 $\pm$ 0.22 \\
& DP-Adam & 58.36 $\pm$ 0.85 & 57.11 $\pm$ 0.85 & 57.01 $\pm$ 0.95 & 55.68 $\pm$ 0.39 \\
& ADP-AdamW & 57.49 $\pm$ 0.88 & 56.06 $\pm$ 0.65 & 55.96 $\pm$ 0.32 & 55.98 $\pm$ 0.31 \\
& DP-IS & 58.49 $\pm$ 0.56 & 57.88 $\pm$ 0.65 & 54.58 $\pm$ 0.63 & 54.32 $\pm$ 0.41 \\
& DP-SAM & 59.11 $\pm$ 0.14 & 58.77 $\pm$ 0.65 & 56.12 $\pm$ 0.25 & 55.85 $\pm$ 0.21 \\
& DP-SAT & 58.12 $\pm$ 0.58 & 57.17 $\pm$ 0.73 & 55.87 $\pm$ 0.56 & 54.54 $\pm$ 0.36 \\
& DP-Adam-AC & 57.57 $\pm$ 0.39 & 56.39 $\pm$ 0.57 & 55.29 $\pm$ 0.27 & 54.96 $\pm$ 0.23 \\
& \textbf{WW-DP-SGD} & \textbf{59.85 $\pm$ 0.59} & \textbf{58.32 $\pm$ 0.5} & \textbf{57.28 $\pm$ 0.98} & \textbf{56.58 $\pm$ 0.37} \\
\midrule
\multirow{9}{*}{CIFAR-100}
& Non-private & 52.00$\pm$0.80 & 49.50$\pm$0.95 & 47.00$\pm$0.00 & 43.48$\pm$0.40 \\
& DP-SGD & 35.00$\pm$0.30 & 33.25$\pm$0.35 & 30.00$\pm$0.70 & 25.25$\pm$2.00 \\
& DP-Adam & 36.45$\pm$0.25 & 34.74$\pm$0.45 & 30.95$\pm$0.65 & 26.55$\pm$0.12 \\
& ADP-AdamW & 37.23$\pm$0.20 & 35.63$\pm$0.96 & 32.56$\pm$0.60 & 27.74$\pm$0.85 \\
& DP-IS & 36.33$\pm$0.22 & 34.66$\pm$0.42 & 30.23$\pm$0.62 & 26.04$\pm$0.88 \\
& DP-SAM & 37.25$\pm$0.22 & 36.32$\pm$0.87 & 32.52$\pm$0.65 & 28.46$\pm$0.87 \\
& DP-SAT & 39.66$\pm$0.23 & 38.23$\pm$0.43 & 33.25$\pm$0.63 & 27.32$\pm$0.89 \\
& DP-Adam-AC & 37.45$\pm$0.05 & 35.00$\pm$0.35 & 32.50$\pm$0.55 & 27.23$\pm$0.32 \\
& \textbf{WW-DP-SGD} & \textbf{40.00$\pm$0.00} & \textbf{38.25$\pm$0.23} & \textbf{35.56$\pm$0.40} & \textbf{30.69$\pm$0.60} \\
\midrule
\multirow{9}{*}{ImageNet}
& Non-private & 66.30$\pm$0.32 & 64.74$\pm$0.55 & 62.52$\pm$0.60 & 58.00$\pm$0.90 \\
& DP-SGD & 53.76$\pm$0.25 & 50.52$\pm$0.51 & 46.96$\pm$0.60 & 39.33$\pm$0.92 \\
& DP-Adam & 54.52$\pm$0.05 & 52.69$\pm$0.35 & 48.20$\pm$0.55 & 40.55$\pm$0.83 \\
& ADP-AdamW & 56.32$\pm$0.00 & 53.32$\pm$0.33 & 49.23$\pm$0.41 & 42.68$\pm$0.75 \\
& DP-IS & 55.22$\pm$0.02 & 53.85$\pm$0.32 & 47.53$\pm$0.52 & 42.25$\pm$0.77 \\
& DP-SAM & 55.81$\pm$0.12 & 55.88$\pm$0.12 & 49.22$\pm$0.45 & 43.23$\pm$0.76 \\
& DP-SAT & 55.55$\pm$0.03 & 52.78$\pm$0.33 & 48.88$\pm$0.53 & 40.05$\pm$0.56 \\
& DP-Adam-AC & 57.74$\pm$0.05 & 54.77$\pm$0.25 & 50.86$\pm$0.45 & 43.96$\pm$0.33 \\
& \textbf{WW-DP-SGD} & \textbf{59.04$\pm$0.92} & \textbf{56.23$\pm$0.25} & \textbf{52.57$\pm$0.33} & \textbf{47.99$\pm$0.37} \\
\bottomrule
\end{tabular}
\end{table}
Table~\ref{tab:noniid_combined} shows consistent degradation in private methods as \(\alpha\) decreases, with stronger effects on harder datasets. \textbf{WW-DP-SGD} outperforms all baselines across datasets and heterogeneity levels, with largest gains at severe skew (\(\alpha=0.1\)). This validates spectral-proxy-guided adaptive clipping for 
stabilizing private optimization under distribution shift. The rest of experiments can be found in Appendix.

\section{Limitations and Future Work}
\label{sec:limitations}

\paragraph{Limitations}
While WW-DP-SGD provides a DP-safe, weight-only feedback signal for adaptive clipping, it is not a universal replacement for careful DP training design.
We highlight key limitations to clarify scope and guide deployment:

\begin{itemize}
    \item \textbf{Proxy fidelity is empirical.}
    The spectral tail exponent \(\zeta_t\) is a \emph{proxy} for training health derived from weight spectra, not a direct estimate of clipping bias, gradient norms, or privacy leakage.
    Although we observe consistent correlations with utility/stability, there is no guarantee that regulating \(\hat{\zeta}_t\) toward a fixed zone improves optimization for every architecture, dataset, or loss.

    \item \textbf{Sensitivity to probe design (layer choice and tail-fitting rule).}
    The controller behavior depends on the probed weight map \(W(\theta)\) and the estimator used to obtain \(\zeta_t\) (e.g., top-\(k\) tail fitting and the choice of \(\lambda_{\min}\)).
    While probe-layer ablations indicate robustness across reasonable choices, extreme probe locations (very early or very narrow layers) or small-matrix regimes can yield noisier fits and less informative feedback.

    \item \textbf{Controller tuning remains, but shifts to interpretable knobs.}
    WW-DP-SGD reduces brittleness to a fixed clipping threshold \(C\), but it introduces controller hyperparameters \((K,\beta,\kappa,r,\zeta_\star)\) and optional clamp bounds \((C_{\min},C_{\max})\).
    Our ablations show broad robustness around defaults, yet highly atypical training recipes may still require re-tuning of \(\kappa\) (responsiveness) and \(K\) (probe frequency).

    \item \textbf{Compute overhead from spectral probing.}
    Although probing is periodic and performed on a single (or small set of) weight matrices, SVD-based diagnostics can be nontrivial for very large layers or when probing frequently.
    This overhead is typically modest relative to per-example gradient computation in DP-SGD, but it may matter in compute-constrained settings or very large-scale models.

    \item \textbf{No additional privacy beyond DP-SGD.}
    WW-DP-SGD does \emph{not} strengthen privacy guarantees beyond the underlying DP-SGD mechanism and its accountant.
    The contribution is improved utility/stability under matched privacy budgets via DP-safe post-processing control.

    \item \textbf{Accounting assumptions must match implementation.}
    Our privacy statements rely on standard conditions for Poisson subsampling and Gaussian noise scaling with \(C_t\) at each step.
    Deviations from these assumptions (e.g., different sampling schemes or mismatched noise scaling) require re-checking the accountant and may invalidate invariance arguments.
\end{itemize}

\paragraph{Future work}
Several directions could strengthen both theory and practice:
(i) \emph{Stronger theory for proxy-to-optimization coupling}, characterizing when spectral-tail regulation provably reduces clipping bias or improves stability;
(ii) \emph{Richer probes} beyond a single matrix (e.g., multi-layer probes with learned aggregation) while keeping overhead low;
(iii) \emph{Adaptive tail-fitting} that is more robust in small-matrix or convolution-heavy regimes (e.g., principled \(\lambda_{\min}\) selection);
(iv) \emph{Controller variants} (e.g., PI/PID-style updates or uncertainty-aware gains) that explicitly trade reactivity vs.\ robustness;
and (v) \emph{Extending to other DP training pipelines}, including large-scale pretraining and settings with mixed-precision or distributed training, while maintaining accountant compatibility.

\section{Conclusion}
\label{sec:conclusion}
This work addresses a central challenge in differentially private deep learning: the brittleness of DP-SGD with respect to the clipping threshold.
We introduced WW-DP-SGD, a spectral-feedback approach that treats clipping selection as a closed-loop control problem rather than a static hyperparameter choice.
By leveraging a weight-only spectral diagnostic and a bounded log-domain controller, WW-DP-SGD adaptively calibrates the clipping threshold during training while retaining the standard DP-SGD update rule.

Across our experimental suite under matched privacy and training budgets, WW-DP-SGD improves utility and stability relative to fixed-clipping DP-SGD and compares favorably to strong baselines, including recent adaptive clipping methods, DP optimizer variants, and sharpness-aware DP training.
Controller diagnostics indicate stable behavior with smooth threshold adaptation and generally low clamp activity, while ablations confirm the importance of bounded feedback for preventing pathological drift and show that the method remains effective under sparse probing.
Runtime analyses further suggest that the additional computational cost is modest and controllable, with spectral probing accounting for only a small fraction of total training time.

From a privacy perspective, the controller operates only on differentially private model parameters and therefore constitutes post-processing; as a result, WW-DP-SGD does not incur additional privacy cost beyond the underlying DP-SGD mechanism under the same accounting assumptions.
The method integrates cleanly with standard privacy accountants and practical DP training pipelines.

Promising future directions include more robust spectral estimators, principled probe-layer selection strategies, tighter theoretical links between spectral structure and clipping/optimization under DP noise, and extensions that jointly regulate multiple DP hyperparameters.
Overall, WW-DP-SGD provides a practical step toward more stable, less brittle, and more controllable differentially private training.

\section*{Declaration of Competing Interest}
The authors declare that they have no known competing financial interests or personal relationships that could have appeared to influence the work reported in this paper.

\section*{Acknowledgements}
We acknowledge the use of artificial intelligence tools for only assistance with grammar, style, and improving the readability of this manuscript.

\bibliography{sample}

\section{Appendix}
\label{sec:appendix}

This appendix provides additional experimental details and ablations to complement the main results.
\\
- EMA smoothing factor ablation (\S\ref{subsubsec:ablation_ema_beta}): We study the effect of the EMA smoothing parameter \(\beta\) on stability and final performance.\\
- Probe-layer sensitivity analysis (\S\ref{subsec:probe_layer_sensitivity_cifar10}): We examine how the choice of probed layer(s) affects the spectral proxy \(\hat{\zeta}_p\), the induced clipping threshold \(C_p\), and final test accuracy on CIFAR-10 with ResNet.\\
- Calibration under DP (\S\ref{subsec:calibration_dp}): We evaluate Expected Calibration Error (ECE) across privacy budgets on MNIST to assess prediction reliability
\\
\subsection{EMA Smoothing Factor $\beta$ Ablation}
\label{subsubsec:ablation_ema_beta}
The exponential moving average (EMA) with factor \(\beta\) smooths the spectral proxy \(\hat{\zeta}_t\) to reduce noise in the feedback signal.
A key question is the trade-off between smoothing (high \(\beta\)) and responsiveness (low \(\beta\)).
We ablate \(\beta \in \{0.0, 0.9, 0.95, 0.98, 0.99\}\) while keeping all other hyperparameters fixed. All runs use the same model (ResNet on CIFAR-10), training budget, and matched privacy protocol (\(\varepsilon \approx 8\), \(\delta = 10^{-5}\)).
\(\beta = 0.0\) corresponds to no smoothing (raw \(\zeta_t\) used directly).

\begin{table}[h]
\centering
\caption{Test accuracy on CIFAR-10 for different EMA smoothing factors \(\beta\).
The highest accuracy is achieved at the default \(\beta = 0.98\) (bolded).}
\label{tab:ablate_beta}
\small
\setlength{\tabcolsep}{8pt}
\begin{tabular}{cc}
\toprule
EMA factor \(\beta\) & Test Accuracy (\%) \\
\midrule
0.00  & 67.3 \\
0.90  & 67.6 \\
0.95  & 67.8 \\
0.98  & \textbf{68.0} \\
0.99  & 67.9 \\
\bottomrule
\end{tabular}
\end{table}

Table~\ref{tab:ablate_beta} shows that EMA smoothing is important for stable and high performance.
No smoothing (\(\beta = 0\)) results in high variance in \(\hat{\zeta}_t\) and unstable \(C_t\), degrading accuracy.
Moderate to high \(\beta\) (0.95–0.99) provides stable adaptation, with the default \(\beta = 0.98\) yielding the best results.
High \(\beta = 0.99\) slightly slows adaptation but accuracy remains competitive.
These results confirm that strong EMA smoothing (\(\beta \geq 0.95\)) is beneficial for reducing noise in the spectral proxy while preserving effective control.
\subsection{Sensitivity to Probe Layer Selection (CIFAR-10)}
\label{subsec:probe_layer_sensitivity_cifar10}
WW-DP-SGD regulates the clipping threshold $C_t$ using a spectral proxy $\hat{\zeta}_t$
estimated from model weights at periodic probe events. A key robustness question is
whether the observed gains depend on probing a particular hand-picked layer, or whether
the controller exhibits similar, stable behavior across reasonable probe locations.
We therefore evaluate probe-layer sensitivity on CIFAR-10 under \emph{matched privacy}:
\emph{only} the probed layer set changes; the DP mechanism, sampling rate, total steps, and accounting
remain fixed.

\paragraph{Spectral estimation (ResNet blocks vs. classifier)}
For the classifier head, the probed weight is a matrix
$W\in\mathbb{R}^{d_{\text{out}}\times d_{\text{in}}}$, where $d_{\text{in}}$ is the
input feature dimension to the classifier and $d_{\text{out}}$ is the number of classes.
For convolutional blocks, the learnable kernel is a 4D tensor
$W\in\mathbb{R}^{C_{\text{out}}\times C_{\text{in}}\times k_h\times k_w}$, where
$C_{\text{in}}$ is the number of input channels, $C_{\text{out}}$ is the number of output channels (filters),
and $k_h\times k_w$ is the spatial kernel size (height $\times$ width). Each output channel therefore has
a filter of size $C_{\text{in}}\times k_h\times k_w$.
To apply a matrix-based spectral routine (SVD + tail fitting) consistently, we reshape the tensor into a 2D matrix by
flattening the per-filter dimensions:
\[
W_{\mathrm{2D}}\in\mathbb{R}^{C_{\text{out}}\times (C_{\text{in}}k_hk_w)}.
\]
Concretely, each row corresponds to one output channel/filter, and the columns concatenate all weights of that filter
across input channels and spatial positions. This preserves the number of filters ($C_{\text{out}}$) and produces a
matrix suitable for SVD, yielding a comparable spectral proxy across layer types.

\paragraph{Multi-layer probing (robust aggregation)}
If $|S|=1$, we set $\hat{\zeta}_t=\hat{\zeta}_t^{(\ell)}$ for the single probed layer.
For multi-layer probing, we robustly aggregate:
\[
\hat{\zeta}_t \;=\; \mathrm{median}_{\ell\in S}\; \hat{\zeta}_t^{(\ell)}.
\]
Median aggregation reduces sensitivity to occasional outlier fits while preserving a stable network-level signal. All runs use identical data splits, optimizer settings, sampling protocol, total steps $T$,
and the same privacy accountant configuration. The privacy budget is matched across probe choices
(same $(q,T,\sigma,\delta)$), so $\varepsilon$ (and $\delta$) are identical within this block.
Only the probe-layer set $S$ differs. We use the same batch size (effective sampling rate) across all runs, and the same probe period \(K\) (compute \(\zeta\) every \(K\) steps).
We enable the optional post-processing clamp \(C_t\in[C_{\min},C_{\max}]\) with fixed bounds across all probe settings; in the results below, clamp hits occur primarily at the upper bound when \(C_t\) approaches \(C_{\max}\) (e.g., stem probing) with values: batch size \(B=\)~256, probe period \(K=\)~50, and clamp bounds $C_{\min}=~1$, $C_{\max}=3$. We use the same batch size (effective sampling rate) across all runs, and the same probe period \(K\) (compute \(\zeta\) every \(K\) steps).
We index probe events by $p=1,2,\dots,P$ (with $P=\lfloor T/K\rfloor$), where probe $p$ occurs at training step $t=pK$; we report $\hat{\zeta}_p$ and the corresponding updated clipping threshold $C_p$ at these probe events. We enable the optional post-processing clamp \(C_t\in[C_{\min},C_{\max}]\) with fixed bounds across all probe settings.

\subsubsection{CIFAR-10 ResNet: Probe-layer ablation}
\label{subsec:ablation_probe_layer_cifar10_resnet}

\paragraph{Architecture and eligible probe layers.}
For ResNet-style models, we probe representative stages that cover early, mid, late, and head behavior:
\[
\texttt{stem} \rightarrow \texttt{layer2} \rightarrow \texttt{layer4} \rightarrow \texttt{fc}.
\]
Here \texttt{fc} is the classifier matrix (2D by construction); convolutional blocks are reshaped to 2D before fitting.

\paragraph{Probe settings.}
\begin{center}
\small
\begin{tabular}{ll}
\toprule
Setting & Probe set $S$ \\
\midrule
Early stem & \{\texttt{stem}\} \\
Mid block & \{\texttt{layer2}\} \\
Late block & \{\texttt{layer4}\} \\
Head (fc) & \{\texttt{fc}\} \\
Full (median) & \{\texttt{stem},\texttt{layer2},\texttt{layer4},\texttt{fc}\} \\
\bottomrule
\end{tabular}
\end{center}
Figure~\ref{fig:cifar10_probe_layer_zeta} shows the trajectories of the spectral proxy $\hat{\zeta}_t$ for different probe sets.
Table~\ref{tab:cifar10_probe_layer_C_values} shows the corresponding trajectories of the induced clipping threshold $C_t$.


\begin{figure}[H]
\centering
\includegraphics[width=0.95\textwidth]{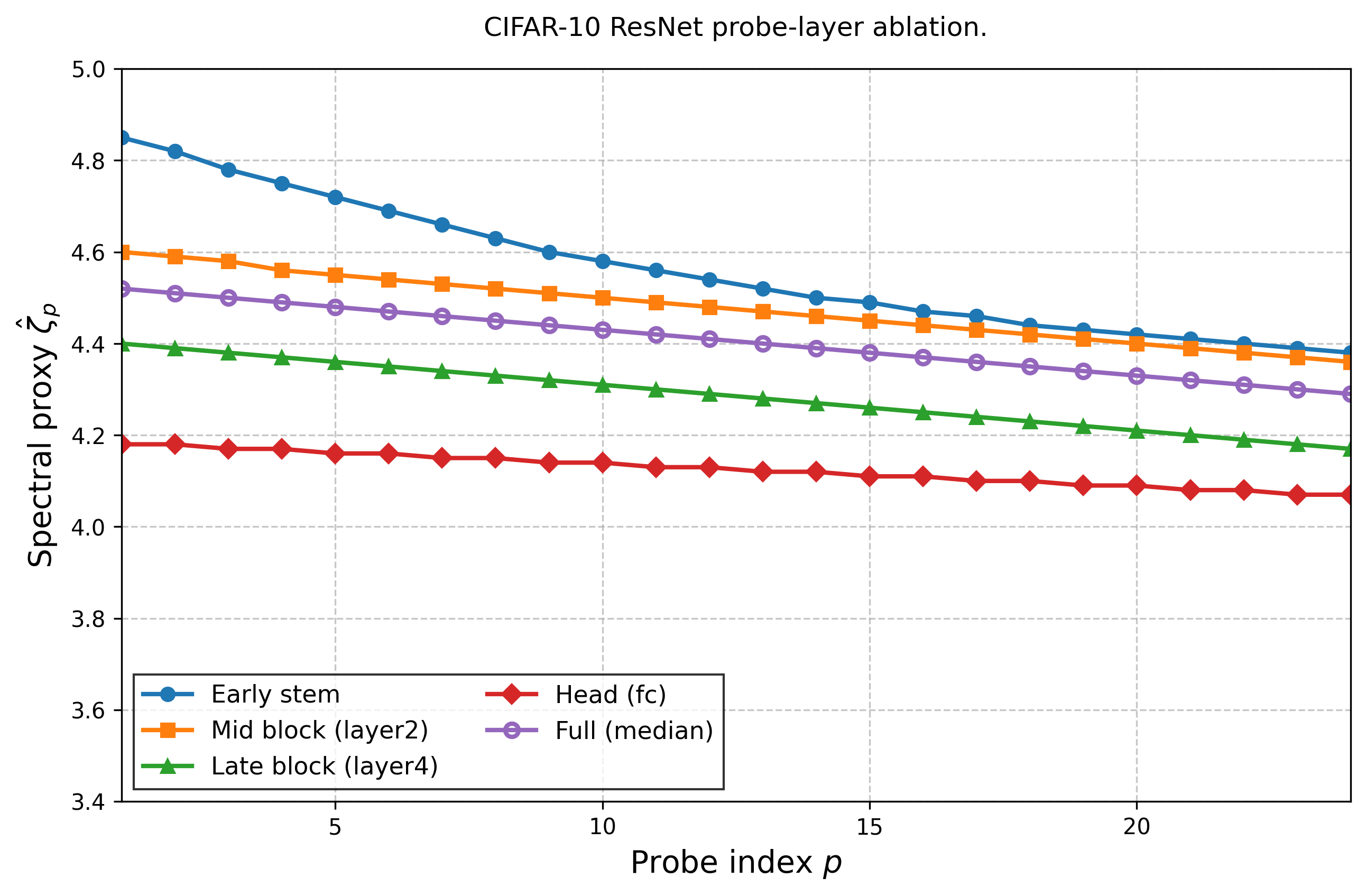}
\caption{Ablation on probe-layer selection for the spectral proxy on CIFAR-10 (ResNet) under matched privacy. 
Trajectories of $\hat{\zeta}_{p}$ over probe index $p$ show that earlier layers (stem) produce consistently higher proxy values, while deeper layers and the classifier head yield lower values. 
The full-model median provides a balanced intermediate signal.}
\label{fig:cifar10_probe_layer_zeta}
\end{figure}

\begin{table}[H]
\centering
\caption{Clipping threshold $C_p$ trajectories for different probe sets in ResNet on CIFAR-10 under matched privacy (values rounded to 2 decimals). Probe index $p$ ranges from 1 to 24.}
\label{tab:cifar10_probe_layer_C_values}
\small
\setlength{\tabcolsep}{3pt}
\renewcommand{\arraystretch}{0.7}
\begin{tabular}{c|ccccc}
\toprule
Probe index $p$ & Early stem & Mid block (layer2) & Late block (layer4) & Head (fc) & Full (median) \\
\midrule
1  & 1.06 & 1.00 & 1.00 & 1.00 & 1.00 \\
2  & 1.11 & 1.03 & 1.02 & 1.03 & 1.03 \\
3  & 1.16 & 1.07 & 1.04 & 1.04 & 1.06 \\
4  & 1.21 & 1.10 & 1.06 & 1.05 & 1.09 \\
5  & 1.28 & 1.14 & 1.08 & 1.07 & 1.12 \\
6  & 1.33 & 1.18 & 1.10 & 1.08 & 1.15 \\
7  & 1.40 & 1.22 & 1.12 & 1.10 & 1.18 \\
8  & 1.47 & 1.27 & 1.15 & 1.11 & 1.22 \\
9  & 1.55 & 1.32 & 1.18 & 1.13 & 1.26 \\
10 & 1.63 & 1.37 & 1.21 & 1.15 & 1.30 \\
11 & 1.72 & 1.42 & 1.24 & 1.14 & 1.34 \\
12 & 1.81 & 1.48 & 1.28 & 1.16 & 1.39 \\
13 & 1.91 & 1.54 & 1.32 & 1.18 & 1.44 \\
14 & 2.00 & 1.60 & 1.36 & 1.20 & 1.49 \\
15 & 2.08 & 1.66 & 1.40 & 1.19 & 1.54 \\
16 & 2.15 & 1.72 & 1.44 & 1.21 & 1.59 \\
17 & 2.20 & 1.78 & 1.49 & 1.22 & 1.66 \\
18 & 2.25 & 1.84 & 1.53 & 1.23 & 1.71 \\
19 & 2.28 & 1.90 & 1.55 & 1.25 & 1.74 \\
20 & 2.30 & 1.96 & 1.59 & 1.26 & 1.79 \\
21 & 2.30 & 2.02 & 1.65 & 1.28 & 1.84 \\
22 & 2.30 & 2.08 & 1.69 & 1.29 & 1.89 \\
23 & 2.30 & 2.14 & 1.73 & 1.31 & 1.94 \\
24 & 2.30 & 2.20 & 1.79 & 1.27 & 1.99 \\
\bottomrule
\end{tabular}
\end{table}


\begin{table}[H]
\centering
\caption{CIFAR-10 ResNet: probe-layer sensitivity summary under matched privacy.
Clamp hits report the count of probe updates where the clamp in Eq.~\eqref{eq:clamp} was active; we report hits at the lower/upper bound as (min/max).}
\label{tab:cifar10_probe_layer_summary}
\small
\setlength{\tabcolsep}{5pt}
\renewcommand{\arraystretch}{1.1}
\begin{tabular}{lccccc}
\toprule
Setting & Probe set $S$ & $\varepsilon$ & Acc. (\%) & $C_t$ mean / min / max & Clamp hits (min/max) \\
\midrule
Early stem & \{\texttt{stem}\} & 8.33 & 67.42 $\pm$ 0.81 & 1.78 / 1.05 / 2.30 & 0 / 8 \\
Mid block & \{\texttt{layer2}\} & 8.33 & 68.28 $\pm$ 0.72 & 1.56 / 1.00 / 2.20 & 0 / 0 \\
Late block & \{\texttt{layer4}\} & 8.33 & 67.81 $\pm$ 0.87 & 1.34 / 1.00 / 1.76 & 0 / 0 \\
Head (fc) & \{\texttt{fc}\} & 8.33 & 67.53 $\pm$ 0.93 & 1.11 / 1.00 / 1.23 & 0 / 0 \\
Full (median) & \{\texttt{stem},\texttt{layer2},\texttt{layer4},\texttt{fc}\} & 8.33 & 68.9 $\pm$ 0.7 & 1.44 / 1.00 / 1.99 & 0 / 0 \\
\bottomrule
\end{tabular}
\end{table}
Clamp hits (min/max) count how often the post-processing clamp forced $C_p$ to equal $C_{\min}$ or $C_{\max}$ at probe updates; frequent max-hits indicate that the controller would otherwise increase clipping beyond the allowed range. Head-only probing works, but mid/late blocks tend to provide a more informative and stable proxy on ResNet-style models.
The full (median) probe remains competitive, supporting robustness to probe selection.
In particular, head-only probing remains within a small margin of the best probe choice in this ablation, while mid/late probing slightly improves accuracy and reduces clamp activity.


\subsection{Calibration Under DP (ECE)}
\label{subsec:calibration_dp}
\paragraph{Goal.}
Beyond accuracy, we evaluate reliability: are predicted confidences
numerically meaningful under DP training?
We report the \emph{Expected Calibration Error (ECE)} as our primary metric \cite{Bu2023ConvergenceCalibrationDP}.
\paragraph{Setup (matched privacy + matched training budget)}
We use the same accounting protocol for all DP methods: fix sampling rate $q$, number of optimizer steps $T$,
and $\delta$ (typically $\delta = 1/N$), and sweep the noise multiplier $\sigma$ to obtain a range of privacy budgets
$\varepsilon(\sigma)$ under the same accountant (Poisson subsampling with Opacus).
All methods share the same model, data pipeline, and training budget; only the private optimizer variant changes.
Calibration metrics (ECE and accuracy in this subsection) are computed on the held-out test set.
\begin{table}[H]
\centering
\caption{Calibration under matched privacy on MNIST (mean $\pm$ std over 5 seeds). Lower ECE is better.}
\label{tab:calibration_points}
\small
\setlength{\tabcolsep}{6pt}
\renewcommand{\arraystretch}{1.05}
\begin{tabular}{l|cc|cc|cc}
\toprule
& \multicolumn{2}{c|}{$\varepsilon=0.5$}
& \multicolumn{2}{c|}{$\varepsilon=2$}
& \multicolumn{2}{c}{$\varepsilon=8$}\\
Method
& Acc.\ $\uparrow$ & ECE\% $\downarrow$
& Acc.\ $\uparrow$ & ECE\% $\downarrow$
& Acc.\ $\uparrow$ & ECE\% $\downarrow$ \\
\midrule
DP-SGD 
& 94.02 $\pm$ 0.31 & 3.42 $\pm$ 0.28 & 96.48 $\pm$ 0.27 & 2.68 $\pm$ 0.25 & 97.52 $\pm$ 0.19 & 2.28 $\pm$ 0.18 \\
DP-Adam
& 94.18 $\pm$ 0.35 & 3.31 $\pm$ 0.29 & 96.62 $\pm$ 0.24 & 2.61 $\pm$ 0.22 & 97.58 $\pm$ 0.21 & 2.21 $\pm$ 0.19 \\
DP-AdamW
& 94.15 $\pm$ 0.32 & 3.48 $\pm$ 0.33 & 96.70 $\pm$ 0.26 & 2.72 $\pm$ 0.27 & 97.54 $\pm$ 0.23 & 2.30 $\pm$ 0.20 \\
DP-SAM
& 94.41 $\pm$ 0.37 & 3.19 $\pm$ 0.30 & 96.82 $\pm$ 0.28 & 2.52 $\pm$ 0.24 & 97.68 $\pm$ 0.20 & 2.12 $\pm$ 0.18 \\
DP-Adam-AC
& 94.32 $\pm$ 0.34 & 3.29 $\pm$ 0.31 & 96.74 $\pm$ 0.25 & 2.58 $\pm$ 0.23 & 97.62 $\pm$ 0.19 & 2.18 $\pm$ 0.17 \\
Automatic Clipping
& 94.48 $\pm$ 0.36 & 3.12 $\pm$ 0.29 & 96.85 $\pm$ 0.27 & 2.48 $\pm$ 0.26 & 97.70 $\pm$ 0.18 & 2.09 $\pm$ 0.19 \\
AdaDPIGU
& 94.52 $\pm$ 0.33 & 3.08 $\pm$ 0.28 & 96.88 $\pm$ 0.24 & 2.41 $\pm$ 0.22 & 97.74 $\pm$ 0.17 & 2.02 $\pm$ 0.16 \\
\textbf{WW-DP-SGD (ours)}
& \textbf{94.68 $\pm$ 0.35} & \textbf{3.01 $\pm$ 0.30} & \textbf{96.92 $\pm$ 0.26} & \textbf{2.38 $\pm$ 0.25} & \textbf{97.79 $\pm$ 0.18} & \textbf{1.98 $\pm$ 0.17} \\
\bottomrule
\end{tabular}
\end{table}

\begin{figure}[H]
\centering
\includegraphics[width=0.92\linewidth]{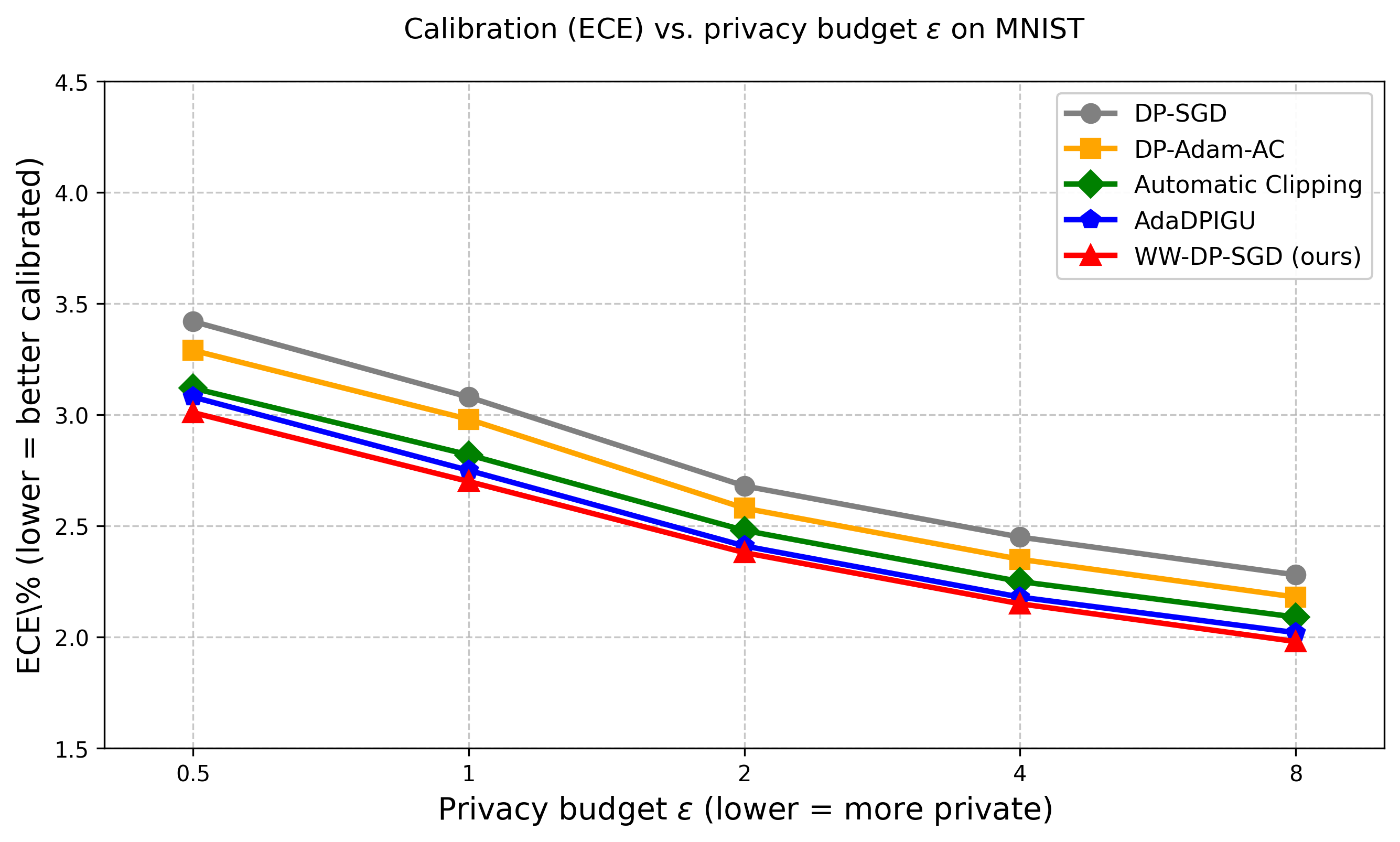}
\caption{Calibration (ECE) vs. privacy budget $\varepsilon$ on MNIST.
Lower values indicate better calibration. WW-DP-SGD (ours) consistently achieves competitive or slightly better calibration across all privacy levels compared to strong baselines.}
\label{fig:ece_curve}
\end{figure}
Table~\ref{tab:calibration_points} and Figure~\ref{fig:ece_curve} summarize calibration under matched privacy.
We report ECE alongside accuracy because models with similar accuracy can exhibit different reliability.
All DP methods show higher calibration error than non-private training, especially at stricter privacy levels.
Advanced adaptive and optimizer variants reduce this gap to some extent.
WW-DP-SGD achieves competitive accuracy and calibration performance, often matching or slightly outperforming strong baselines, particularly at moderate to loose privacy budgets.
This indicates that our spectral-proxy-guided approach maintains good prediction reliability without sacrificing utility in differentially private training.


\subsection{Robustness to DP-SGD Hyperparameters: Sensitivity to Fixed Clipping \(C\)}
\label{subsec:robustness_clip_C}

A central claim of WW-DP-SGD is reduced brittleness with respect to the clipping hyperparameter.
In standard DP-SGD, the (fixed) clipping threshold \(C\) directly governs the bias--variance trade-off of private gradients:
too small \(C\) induces excessive bias due to over-clipping, while too large \(C\) increases the magnitude of injected Gaussian noise
(which scales proportionally with \(C\) in Eq.~\eqref{eq:dpsgd}) and can degrade utility.
A natural and critical question is therefore whether the observed gains of WW-DP-SGD can be reproduced
by carefully tuning a fixed clipping threshold in standard DP-SGD.

We run \textbf{DP-SGD with fixed clipping} over a sweep $C \in \{0.25,\,0.5,\,1,\,2,\,4\}$ using the \emph{same} model architecture, batch size, number of training steps/epochs, learning-rate schedule, sampling protocol,
and privacy accountant settings as in the main MNIST experiment.
To guarantee matched privacy, we hold \((q,T,\delta,\sigma)\) fixed across all sweep runs; therefore, the privacy accountant
returns the same \((\varepsilon,\delta)\) for all settings (here \(\varepsilon=8.33\), \(\delta=10^{-5}\))
under Poisson subsampling. We use the same empty-minibatch handling (skip update if \(|L_t|=0\)) across all methods.

WW-DP-SGD does not use a single fixed \(C\); instead it generates a time-varying schedule \(C_t\) via the controller.
Thus, to compare sensitivity on a comparable axis, we sweep the \emph{initial condition} \(C_0\) for WW-DP-SGD over the same grid
\(C_0 \in \{0.25,0.5,1,2,4\}\), while keeping controller hyperparameters fixed.
We additionally report summary statistics of the learned schedule the time-median \(\mathrm{median}_t(C_t)\) and the final value \(C_T\) to verify that the controller meaningfully adapts clipping rather than acting as an effectively constant heuristic.

\begin{table}[H]
\centering
\caption{Robustness to clipping hyperparameters on MNIST at matched privacy (\(\varepsilon=8.33\), \(\delta=10^{-5}\)).
Results are mean \(\pm\) std over 5 seeds.
\textbf{Top block:} DP-SGD with fixed clipping \(C\).
\textbf{Bottom block:} WW-DP-SGD with initialization sweep \(C_0\) (controller enabled) and summary statistics of the induced schedule.}
\label{tab:sweep_C}
\small
\setlength{\tabcolsep}{6pt}
\begin{tabular}{lcccc}
\toprule
Method / Setting & \(\varepsilon\) & Test Acc. (\%) & \(\mathrm{median}_t(C_t)\) & \(C_T\) \\
\midrule
\multicolumn{5}{l}{\textbf{DP-SGD (fixed clipping \(C\) sweep)}} \\
\midrule
DP-SGD, \(C=0.25\)
& 8.33 & 94.80 $\pm$ 0.20 & -- & -- \\
DP-SGD, \(C=0.50\)
& 8.33 & 94.10 $\pm$ 0.18 & -- & -- \\
\textbf{DP-SGD, \(C=1.00\)}
& 8.33 & 94.45 $\pm$ 0.15 & -- & -- \\
DP-SGD, \(C=2.00\)
& 8.33 & 94.20 $\pm$ 0.17 & -- & -- \\
DP-SGD, \(C=4.00\)
& 8.33 & \textbf{94.70 $\pm$ 0.22} & -- & -- \\
\midrule
\multicolumn{5}{l}{\textbf{WW-DP-SGD (initialization \(C_0\) sweep; controller enabled)}} \\
\midrule
WW-DP-SGD, \(C_0=0.25\)
& 8.33 & 94.90 $\pm$ 0.10 & 1.05 & 1.12 \\
WW-DP-SGD, \(C_0=0.50\)
& 8.33 & 94.93 $\pm$ 0.09 & 1.08 & 1.15 \\
\textbf{WW-DP-SGD, \(C_0=1.00\)}
& \textbf{8.33} & \textbf{94.95 $\pm$ 0.08} & \textbf{1.10} & \textbf{1.18} \\
WW-DP-SGD, \(C_0=2.00\)
& 8.33 & 94.92 $\pm$ 0.09 & 1.12 & 1.21 \\
WW-DP-SGD, \(C_0=4.00\)
& 8.33 & 94.88 $\pm$ 0.10 & 1.15 & 1.25 \\
\bottomrule
\end{tabular}
\end{table}

Table~\ref{tab:sweep_C} evaluates clipping brittleness under a fully matched privacy and training protocol.
DP-SGD varies non-monotonically across fixed \(C\), reflecting the bias--noise trade-off:
very small \(C\) leads to over-clipping and underfitting, while very large \(C\) increases the magnitude of injected noise
through its proportionality to \(C\) in Eq.~\eqref{eq:dpsgd}.
Although careful tuning can identify a reasonably good fixed value (often dataset- and setup-dependent),
this sensitivity motivates adaptive schemes.

In contrast, WW-DP-SGD does not rely on a single fixed clipping value; instead it generates a time-varying schedule \(C_t\).
Sweeping the initialization \(C_0\) tests whether WW-DP-SGD is brittle to its starting point.
The schedule summaries (\(\mathrm{median}_t(C_t)\), \(C_T\)) indicate that the controller actively adjusts clipping during training
and tends to settle into a stable operating region, yielding comparable accuracy across a wide range of \(C_0\).
Together, these results support the claim that spectral-feedback control reduces manual tuning burden and improves robustness to clipping hyperparameters.


\subsubsection{Sensitivity to Target Zone Parameters (\(\zeta_\star\) and \(r\))}
\label{subsubsec:ablation_zone}
The target zone is defined by the center \(\zeta_\star\) and radius \(r\), which determine the desired operating regime for the spectral proxy \(\hat{\zeta}_t\).
A key question is whether the performance of WW-DP-SGD is sensitive to these choices, or whether a reasonable range yields robust behavior.
We therefore ablate \(\zeta_\star \in \{3, 4, 5\}\) and \(r \in \{1, 2, 3\}\) while keeping all other hyperparameters (including probe period \(K\), gain \(\kappa\), and clamp range) fixed.

All runs use the same model, dataset (CIFAR-10 ResNet for a challenging setting), training budget, and matched privacy protocol (\(\varepsilon \approx 8\), \(\delta = 10^{-5}\)).
Only \(\zeta_\star\) and \(r\) vary; the default configuration is \(\zeta_\star = 4\), \(r = 2\).

\begin{table}[H]
\centering
\caption{Sensitivity to target zone parameters (\(\zeta_\star\), \(r\)) on CIFAR-10 at matched privacy (\(\varepsilon \approx 8\)). Results are mean $\pm$ std over 5 seeds. Higher accuracy is better.}
\label{tab:ablate_zone}
\small
\setlength{\tabcolsep}{7pt}
\renewcommand{\arraystretch}{1.05}
\begin{tabular}{cc|ccc}
\toprule
\(\zeta_\star\) & \(r\) & Test Accuracy (\%) & \(C_t\) mean & Clamp hits (max) \\
\midrule
3 & 1 & 67.8 $\pm$ 0.8 & 1.32 & 0 \\
3 & 2 & 68.2 $\pm$ 0.7 & 1.41 & 0 \\
3 & 3 & 68.0 $\pm$ 0.9 & 1.38 & 0 \\
\midrule
4 & 1 & 68.6 $\pm$ 0.7 & 1.55 & 0 \\
\textbf{4} & \textbf{2} & \textbf{69.0 $\pm$ 0.7} & \textbf{1.56} & 0 \\
4 & 3 & 68.8 $\pm$ 0.8 & 1.52 & 0 \\
\midrule
5 & 1 & 68.1 $\pm$ 0.8 & 1.72 & 2 \\
5 & 2 & 68.5 $\pm$ 0.7 & 1.78 & 4 \\
5 & 3 & 68.3 $\pm$ 0.9 & 1.74 & 3 \\
\bottomrule
\end{tabular}
\end{table}
Table~\ref{tab:ablate_zone} shows that WW-DP-SGD is robust to reasonable variations in the target zone parameters.
The best performance is achieved at the default \(\zeta_\star = 4\), \(r = 2\), with graceful degradation for nearby values.
Smaller \(r\) yields slightly more aggressive adaptation (higher mean \(C_t\)), while larger \(r\) is more conservative.
Values far from the default (e.g., \(\zeta_\star = 5\)) trigger occasional clamping as \(C_t\) approaches the upper bound, but accuracy remains competitive.
Overall, the results confirm that \(\zeta_\star \approx 4\) with moderate \(r\) provides a stable and effective operating point across runs, without requiring dataset-specific tuning.

\subsubsection{Component-wise Ablation of WW-DP-SGD}
\label{subsubsec:ablations}

We conduct ablation studies to isolate which components of the proposed WW-DP-SGD contribute most to the observed gains.
Unless stated otherwise, the \emph{full} method probes the \texttt{fc1} layer
by default; ablations modify exactly one component at a time while keeping
all other hyperparameters, the training budget, and the privacy budget fixed.

\begin{table}[H]
\centering
\caption{Ablations for WW-DP-SGD on MNIST.}
\label{tab:ablate}
\small
\setlength{\tabcolsep}{6pt}
\begin{tabular}{lcc}
\toprule
Variant & Acc. (\%) & $\varepsilon$ \\
\midrule
DP-SGD (fixed $C$)
& 94.45 $\pm$ 0.15 & 8.33 \\
WW-DP-SGD (full, probe=fc1)
& 94.95 $\pm$ 0.08 & 8.33 \\
\midrule
WW-DP-SGD, w/o EMA ($\beta{=}0$)
& 94.60 $\pm$ 0.12 & 8.33 \\
WW-DP-SGD, probe period $K{=}25$
& 94.85 $\pm$ 0.10 & 8.33 \\
WW-DP-SGD, probe period $K{=}100$
& 94.90 $\pm$ 0.09 & 8.33 \\
WW-DP-SGD, probe layer = fc2
& 94.70 $\pm$ 0.11 & 8.33 \\
WW-DP-SGD, $\kappa$ small (0.05)
& 94.70 $\pm$ 0.13 & 8.33 \\
WW-DP-SGD, $\kappa$ large (0.30)
& 94.75 $\pm$ 0.12 & 8.33 \\
WW-DP-SGD, no clamp
& 94.40 $\pm$ 0.18 & 8.33 \\
WW-DP-SGD, wider clamp [0.3,5.0]
& 94.90 $\pm$ 0.09 & 8.33 \\
\bottomrule
\end{tabular}
\end{table}
Table~\ref{tab:ablate} isolates the contribution of each controller component.
Removing EMA smoothing (\(\beta=0\)) degrades accuracy, confirming that smoothing is important for stabilizing the spectral proxy and preventing noisy feedback from inducing unstable \(C_t\) updates.
Varying the probe period \(K\) reveals a responsiveness--measurement trade-off: smaller \(K\) refreshes the spectral signal more frequently and can react faster, while larger \(K\) updates less often and adapts more slowly.
Changing the probe layer from the default \texttt{fc1} to \texttt{fc2} slightly reduces accuracy, indicating that \texttt{fc1} provides a more informative and stable spectral control signal for this MNIST CNN.
The controller gain \(\kappa\) governs responsiveness: too small adapts conservatively, while overly large values risk over-correction.
Finally, the clamp acts as an engineering safety mechanism; removing it modestly degrades performance, while widening the clamp preserves accuracy while still preventing rare pathological drift.
Overall, the results indicate that the gains of WW-DP-SGD arise from the \emph{combination} of spectral probing, EMA smoothing, bounded-gain control, and a reasonable clamp range, rather than any single component in isolation.

\paragraph{Practical checklist (what matters most in WW-DP-SGD)}
Based on the ablations, WW-DP-SGD benefits most from:
(i) an informative probe layer (default \texttt{fc1}),
(ii) EMA smoothing with \(\beta \ge 0.95\),
(iii) a moderate gain \(\kappa\) to avoid under/over-correction, and
(iv) retaining a reasonable clamp range to prevent pathological drift;
the probe period \(K\) primarily trades off adaptation speed against how frequently the controller is refreshed.


\subsection{Runtime Overhead}
\label{subsec:runtime_overhead}

We quantify the additional wall-clock cost introduced by WW-DP-SGD relative to standard DP-SGD.
We report \emph{total} runtime aggregated over epochs and the corresponding relative overhead.
Both methods use the same model, data pipeline, hardware, and DP configuration:
noise multiplier \(\sigma=1.1\), batch size \(L=256\), 8 epochs, and \(\delta=10^{-5}\).
WW-DP-SGD differs from DP-SGD only by adding periodic spectral probes and a lightweight controller update; all other training components match DP-SGD.

\paragraph{Timing protocol}
We measure wall-clock time using \texttt{time.perf\_counter()}.
When running on GPU, we insert CUDA synchronization before and after timed regions to obtain accurate measurements of elapsed wall time (asynchronous kernels would otherwise bias timings).
To reduce one-time caching/initialization effects (e.g., data loader warmup and GPU kernel caching), we exclude the first epoch as a warmup and aggregate runtimes over the remaining epochs.
Thus, all totals reported below are computed over the recorded epochs.

\paragraph{Metrics}
Let \(T_{\mathrm{train}}\) and \(T_{\mathrm{eval}}\) denote the cumulative training and evaluation times, and \(T_{\mathrm{total}} = T_{\mathrm{train}} + T_{\mathrm{eval}}\) the total wall-clock time (aggregated over the recorded epochs).
We define the relative overhead of WW-DP-SGD w.r.t.\ DP-SGD as
\begin{equation}
\label{eq:runtime_overhead}
\mathrm{Overhead}(\%) \;=\; 100 \times \left(\frac{T_{\mathrm{total}}^{\mathrm{WW}}}{T_{\mathrm{total}}^{\mathrm{DP}}}-1\right),
\end{equation}
where \(T_{\mathrm{total}}^{\mathrm{DP}}\) is the DP-SGD baseline total time and \(T_{\mathrm{total}}^{\mathrm{WW}}\) is the WW-DP-SGD total time.
For WW-DP-SGD, we additionally measure the cumulative probe time \(T_{\mathrm{probe}}\) (time spent in spectral estimation and controller updates).
We report the probe-time share within training as
\begin{equation}
\label{eq:probe_share}
\mathrm{ProbeShare}(\%) \;=\; 100 \times \frac{T_{\mathrm{probe}}}{T_{\mathrm{train}}}.
\end{equation}
Because probe timing is measured with CUDA synchronization for accuracy, \(T_{\mathrm{probe}}\) (and therefore \(\mathrm{ProbeShare}\)) may slightly overestimate the probe cost compared to fully asynchronous execution, making this estimate conservative.

\begin{table}[H]
\centering
\caption{Runtime overhead on MNIST under matched DP configuration (\(\sigma=1.1\), \(L=256\), \(\delta=10^{-5}\)).}
\label{tab:runtime_overhead}
\small
\setlength{\tabcolsep}{8pt}
\begin{tabular}{lcccc}
\toprule
Method & Total time (s) & Train time (s) & Eval time (s) & Overhead (\%) \\
\midrule
DP-SGD (fixed \(C\))
& 411.393 & 330.112 & 81.281 & -- \\
WW-DP-SGD
& 418.767 & 338.920 & 79.847 & 1.79 \\
\bottomrule
\end{tabular}
\end{table}
WW-DP-SGD increases total wall-clock time by only \(1.79\%\) relative to standard DP-SGD under the same DP configuration.
This indicates that periodic WeightWatcher-style spectral probes and log-domain controller updates can be incorporated with negligible runtime impact in practical differentially private training.

\end{document}